\newtheorem{theorem}{Theorem}
\def\BibTeX{{\rm B\kern-.05em{\sc i\kern-.025em b}\kern-.08em
    T\kern-.1667em\lower.7ex\hbox{E}\kern-.125emX}}
\begin{document}
\bstctlcite{setting}
\title{Digital Twin Assisted Deep Reinforcement Learning for Online Admission Control in Sliced Network}

\author{Zhenyu~Tao,
        Wei~Xu,~\IEEEmembership{Senior Member,~IEEE},
        Xiaohu~You*,~\IEEEmembership{Fellow,~IEEE}

\thanks{Z. Tao is with the National Mobile Communications Research Lab, Southeast University, Nanjing 210096, China (email: zhenyu\_tao@seu.edu.cn).}
\thanks{W. Xu and X. You are with the National Mobile Communications Research Lab, Southeast University, Nanjing 210096, China, and also with the Pervasive Communication Research Center, Purple Mountain Laboratories, Nanjing 211111, China (email: \{wxu, xhyu\}@seu.edu.cn).}
\thanks{X. You is the corresponding author of this paper.}

}

\markboth{IEEE TRANSACTIONS ON , ~Vol.~14, No.~8, August~2021}%
{Shell \MakeLowercase{\textit{et al.}}: A Sample Article Using IEEEtran.cls for IEEE Journals}

\maketitle

\begin{abstract}
The proliferation of diverse wireless services in 5G and beyond has led to the emergence of network slicing technologies. Among these, admission control plays a crucial role in achieving service-oriented optimization goals through the selective acceptance of service requests. Although deep reinforcement learning (DRL) forms the foundation in many admission control approaches thanks to its effectiveness and flexibility, initial instability with excessive convergence delay of DRL models hinders their deployment in real-world networks. We propose a digital twin (DT) assisted DRL solution to address this issue. Specifically, we first formulate the admission decision-making process as a semi-Markov decision process, which is subsequently simplified into an equivalent discrete-time Markov decision process to facilitate the implementation of DRL methods. A neural network-based DT is established with a customized output layer for queuing systems, trained through supervised learning, and then employed to assist the training phase of the DRL model. Extensive simulations show that the DT-assisted DRL improves resource utilization by over 40\% compared to the directly trained state-of-the-art dueling deep Q-learning model. This improvement is achieved while preserving the model's capability to optimize the long-term rewards of the admission process. 
\end{abstract}

\begin{IEEEkeywords}
Network slicing, admission control, digital twin (DT), deep reinforcement learning (DRL)
\end{IEEEkeywords}

%
\IEEEpeerreviewmaketitle

\section{Introduction}
\IEEEPARstart{I}{n} the past decades, the rapid development of communication technologies has led to the ever-growing expansion of network scale and the proliferation of diverse forms of network services, such as high-definition streaming videos, Internet of vehicles, and smart manufacturing facilities. As defined by the 3rd Generation Partnership Project (3GPP), 5G typical use cases include enhanced mobile broadband (eMBB), ultra-reliable low-latency communication (URLLC), and massive machine-type communications (mMTC), each with distinct quality-of-service (QoS) requirements \cite{3GPP.21.915,you2023toward}. 

To satisfy the varying demands of these heterogeneous services, network slicing has been introduced. Network slicing offers flexibility by managing tailored and logically isolated networks that share physical network resource. Multiple network slices coexist in a sliced physical network with limited total resource. Therefore, when conflicting or imminent conflicting service requests within different slices arrive, it is necessary to make choices among these requests to achieve specific objectives, such as maximizing long-term revenue for the infrastructure provider (InP) or realizing the fairness between different slices. This decision-making process is denoted as admission control. 

Conventional admission control approaches, such as searching methods or heuristic schemes, become ineffective or even fail to achieve the optimal solution due to the complexity of contemporary mobile networks \cite{van2019optimal}. Nevertheless, with the significant advancement of high-performance computing devices, researchers resort to learning-based methodologies\cite{xuwei2023-1}, \cite{xu2023toward}, particularly deep reinforcement learning (DRL). In DRL, deep neural networks are leveraged to handle systems with numerous states, and the rewards within DRL make it adaptable to various optimization targets.

While DRL-based admission control methods offer numerous advantages, challenges arise when deploying them in real networks. Early application of reinforcement learning (RL) can be traced back to games, exemplified by AlphaGo for the game of Go \cite{silver2016mastering} and as seen in OpenAI Five for the electronic game of Dota 2 \cite{berner2019dota}. These tasks share a crucial similarity: the training environment is exactly the same as the environment in which they are deployed, ensuring effective training and implementation. However, it is extremely challenging to create a precise virtual environment for network systems due to the complexity of the contemporary mobile network and the diversity of network services. In addition, an insufficiently accurate training environment can inevitably result in DRL models malfunctioning or ineffective when transferred to real networks. On the other hand, directly training DRL models on real networks, i.e., online optimization, always disrupts normal operations of the network and reduces system resource utilization due to the highly stochastic actions of randomly initialized DRL models. Therefore, there is an urgent need to investigate methodologies for the deployment of DRL models with as few as possible disruptions to real-world networks.

Considering the increasingly intricate structure and diversified functionality of modern wireless networks, digital twin (DT) technology, which enables the digitalization of specific physical entities, has been widely recognized as a viable solution for the development, deployment, and optimization of novel technologies with minimal interference to the real-world wireless networks. Currently, DTs for wireless networks have been established from various perspectives \cite{mihai2022digital}. In terms of the core network, programming \cite{rodrigo2023digital} and deep learning method \cite{tao2023deep} have been used to establish a digital twin for risk-free testbeds. Additionally, DT for radio access network (RAN) has been created through the collaboration among multiple carefully designed modules with various functionalities \cite{ren2023end}. For network topology, DT can be created based on graph neural networks \cite{wang2020graph}. In this paper, we employ DT to replicate the admission policy of an existing network and leverage the DT to assist in training the DRL model on the real network. This approach aims to mitigate the adverse impact associated with early-stage training, thus providing a DRL-based online admission control solution with reduced risk for the sliced wireless network. The main contributions of this paper are summarized as follows.

\begin{itemize}
\item{To the best of our knowledge, this is a first try that introduces DT into DRL models for addressing the instability issues within the initial training stage. It provides a practical solution to mitigate the initial stochasticity of the DRL model, thereby enhancing the deployment of DRL models in real-world networks.}
\item{We formulate the admission decision-making process within a network featuring request queues and combinatorial resources as a semi-Markov decision process. Subsequently, we transform it into a simplified but equivalent discrete-time Markov decision process to facilitate the implementation of DRL methods.}
\item{We introduce a neural network-based DT with a customized output layer for handling the queued requests, and leverage supervised learning to replicate network admission policies. An online optimization solution for admission control with DT-assisted DRL is developed, which exhibits significantly enhanced stability compared to traditional DRL training methods.}
\item{Extensive simulations are conducted to validate the effectiveness of the proposed solution. The results demonstrate that our approach significantly improves the resource utilization within the network, particularly during the initial training phase, while it also maintains the DRL model's performance in achieving specific service-oriented objectives.}
\end{itemize}

The remainder of this paper is organized as follows. Relevant works about the admission control for network slicing and the digital twin for mobile networks are reviewed in Section II. System model and problem formulation are described in Section III. Section IV elaborates on the proposed solution through two parts: the deep neural network-based DT and the DT-assisted DRL algorithm. Simulation results are presented and discussed in Section V. Finally, conclusions are drawn in Section VI.








\section{Related Work}

\subsection{Admission Control for Network Slicing}
Numerous studies have investigated admission control problems in sliced networks. Admission control for network slicing can be seen as an extension of call admission control \cite{jiang2022probabilistic}, where the admission policy of network services in different slices is designed to achieve specific targets like revenue maximization, priority assurance, and fairness guarantee. Distinct admission policies for incoming service requests from different slices result in various resource usage among slices. Consequently, the admission control for network slicing is also regarded as a resource allocation method with service requests as the finest granularity in certain literature like \cite{van2019optimal}. 

Conventional admission control mechanisms, e.g., first-come-first-served and random strategies, rely solely on the sequence of service requests and thus can hardly achieve designated goals. To realize the aforementioned targets, several approaches have been introduced. Jiang et al. \cite{jiang2016network} proposed an extensive searching method to improve user experiences within slices and increase network resource utilization. Soliman et al. \cite{soliman2016qos} designed a three-step heuristic scheme to achieve a trade-off between QoS and resource utilization. In \cite{dai2022psaccf}, a heuristic algorithm was proposed to amend priority violations and promote fairness. In \cite{bega2017optimising}, an adaptive algorithm was developed by applying Q-learning to maximize the InP revenue. While in \cite{haque20225g}, integer linear programming was adopted for admission control to maximize the revenue. However, it was pointed out in \cite{van2019optimal} that approaches like searching methods and heuristic algorithms may become inapplicable and cannot ensure optimality in complex network systems with a wide range of resource demands and services. Therefore, authors in \cite{van2019optimal} introduce DRL solutions into admission control tasks to maximize long-term revenue in network systems.

\subsection{Incorporation of DRL and Admission Control}
\label{DRLframework}
The standard DRL framework consists of an agent and an environment. The agent, guided by a policy, decides an action based on the environment’s state. The environment executes the action and provides the agent with a reward, through which the agent refines the policy. According to the implementation strategy of the agent, DRL methods are categorized into three groups. The first is the value-based (critic-only) methods, specifically, Deep Q-Network (DQN) \cite{mnih2013playing} and its variants such as double DQN \cite{van2016deep} and dueling DQN \cite{wang2016dueling}. These methods employ a deep neural network to rate each action by a Q-value, denoting its value for the current state. On the other hand, policy-based (actor-only) methods learn policy from cumulative rewards directly, such as REINFORCE \cite{williams1992simple} and G(PO)MDP \cite{baxter2001infinite}. Due to its high variance and large sampling costs, they are rarely employed in current DRL solutions. The last group is actor-critic methods, which combine value-based and policy-based methods. In such methods, the agent consists of an actor network to signify the probability of each action at current state, and a critic network to evaluate the action for the present state or directly assess the current state. Examples include Asynchronous Advantage Actor Critic (A3C) \cite{mnih2016asynchronous}, Proximal Policy Optimization (PPO) \cite{schulman2017proximal}, and Deep Deterministic Policy Gradient (DDPG) \cite{lillicrap2015continuous}.

Currently, considerable endeavors have been made to incorporate DRL into the admission control of sliced wireless networks. Villota-Jacome et al. \cite{villota2022admission} utilized DQN in the optimization of admission control policy, with the purpose of improving the service provider's profit and resource utilization. Troia et al. \cite{troia2022admission} performed both admission control and virtual network embedding based on advantage actor critic (A2C), the synchronous version of A3C. In \cite{sulaiman2022coordinated}, the authors adopted multi-agent PPO in both RAN slicing and admission control to improve long-term InP revenue.

Although DRL-based methods show outstanding performance in handling the admission control tasks, the instability of DRL at the initial training stage hinders the implementation of DRL in real network systems, especially for latency-critical services \cite{saha2023deep,liu2022deep,you20236g}.

\subsection{Digital Twin for Mobile Networks}
\label{dtsec}
DT is a key technology in creating digital replicas of complex systems, such as aviation, manufacturing, and architecture, to facilitate and evaluate virtual manipulations \cite{Jagatheesaperumal2023}. For mobile networks, DT enables the replication of real networks at different tiers through diverse techniques such as programming, mathematical modeling, and machine learning. Rodrigo et al. \cite{rodrigo2023digital} leveraged virtual machines to realize a DT of the 5G core network with two-way communication capability between real and virtual networks. In \cite{tao2023deep}, the authors adopted a deep learning method to construct a signaling-level DT of the control plane of a core network in a data-driven paradigm. Naeem et al.\cite{naeem2021digital} established a DT of network topology and utilized it in determining the optimal network slicing policy. In \cite{tang2023digital}, a DT of both network element and topology is realized for resource allocation in a sliced network. In \cite{wang2020graph}, a graph neural network-based DT was developed to mirror the network behavior and predict end-to-end latency. These methods leverage DT technology to replicate specific components of the wireless network, such as network topology, RAN, and core, enabling the evaluation and optimization of the physical network.

However, constructing a DT for the DRL environment of admission control tasks can hardly be achieved, due to the necessity not only to precisely digitize the entire wireless network but also to faithfully replicate the behavior of network service requests. Inadequate accuracy in the environment will result in a suboptimal or malfunctioning DRL model after being deployed in the real network. Therefore, in this study, we employ DT to replicate the admission policy of the real network, which is much more viable and implementable, and utilize it to enhance the training process of the DRL model for online admission control within the real sliced network. To the best of our knowledge, there have been few works that employed DT to address the instability issues encountered during the initial training phase of the DRL model.

\section{System Model and Problem Formulation}
\label{sec3}
We consider a typical network comprising three parties: end users, tenants, and InP. The InP is responsible for establishing separate logic network slices on the physical network infrastructure, which are tailored to satisfy tenant requirements. Tenants lease these slices from InP to serve the demands of their subscribers, namely end users. The services requested by end users are executed on slices provided by tenants, and charged based on the resource they utilize, including radio, computational power, and storage. We use $K$ to denote the number of slices, which corresponds to the number of tenants within the network. For a typical 5G network, for instance, we consider $K=4$ to represent a set of typical services including eMBB, URLLC, mMTC, and other.

\begin{figure}[!t]
\centering
\includegraphics[width=0.5\textwidth]{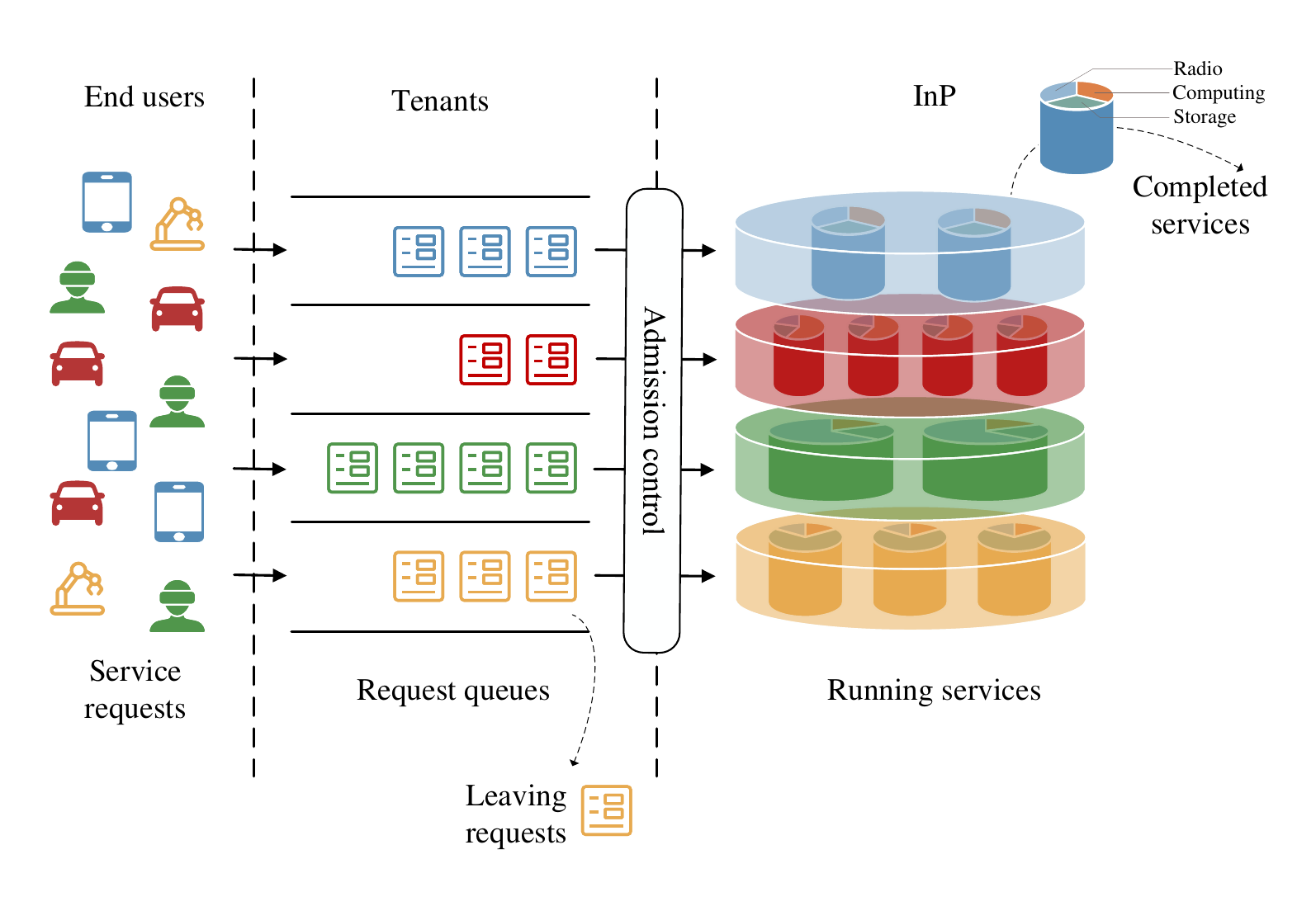}
\caption{Network slicing architecture with admission control}
\label{sys}
\end{figure}

Fig. \ref{sys} illustrated a system architecture of the above-mentioned network. Heterogenous service requests such as utilities, manufacturing, and online videos, are raised by end users. These requests are subsequently sent to tenants possessing the capability to provide relevant slices. With sufficient resource, the tenants then transfer requests to InP, thereby initiating the services on respective slices. However, it is common to encounter scenarios where excessive services are in operation, making remaining resources inadequate or inappropriate for accommodating particular services. In such instances, the corresponding service request should wait in queues for admission. The admission control policy is responsible for assessing both feasibility and priority of admitting requests. When a running service is complete, the occupied resource is released for reassignment. On the other hand, if a queued request experiences a waiting time that exceeds its patience threshold, it is going to be withdrawn from the queue.

Considering distinct application scenarios of these services, service requests and running services across different network slices will exhibit distinctions in terms of arrival patterns, service duration, waiting periods, etc.
Specifically, the arrival process of service requests with slice type $k \in \{1, 2,\dots, K\}$ follows the Poisson distribution with rate $\lambda_k$ and the service's resource occupation time, also known as service time, follows the exponential distribution with mean $1/\mu_k$. The maximum waiting time is set by a hold time $T_k$. If the waiting time of a request surpasses its hold time, it leaves the queue. Otherwise, if admitted, the service continues running until it reaches service time. In terms of resource, we define a vector $\mathbf{r}_k=\left[ r_k^R,r_k^C,r_k^S\right]$ to characterize the resource utilization of an active service, where $r_k^R,r_k^C$, and $r_k^S$ respectively represent the proportions of occupied radio, computing, and storage resources over the total resources. The occupied resource proportions across various services correspond to their distinct characteristics. For instance, services in eMBB slices utilize more radio resource to achieve broadband communication, while those in URLLC slices require slack computing resource to ensure low latency.

This wireless network system operates continuously and makes admission control decisions at any point in time. Thus, we adopt the semi-Markov decision process (SMDP) \cite{tijms2003first} to model the decision-making process in the network. Different from the discrete-time MDP where decisions are made at fixed time slots, the decision points in SMDP are triggered by events, with time intervals between events following a specific probability distribution. We denote SMDP by a 5-tuple $\left\langle \mathcal{S}, \mathcal{A}, \mathcal{T}, \mathcal{P}, \mathcal{R}\right\rangle$, where $\mathcal{S}$ and $\mathcal{A}$ respectively denote the state space and action space, $\mathcal{T}$ describes the distribution of the sojourn time, i.e., the duration between decision epochs, $\mathcal{P}$ represents the transition probability function, and $\mathcal{R}$ indicates the reward function. Functions $\mathcal{T}$, $\mathcal{P}$, and $\mathcal{R}$ possess the following Markovian property: if action $\mathbf{a}$ is chosen in state $\mathbf{s}$ at a decision epoch, then the sojourn time, the transition probability, and the reward depend only on the present state and the action chosen in this state until the next decision epoch.

\subsection{State Space}
The state $\mathbf{s}$ for each decision epoch can be defined by the number of requests $\mathbf{n}^\text{req}$ waiting in queues and the number of services $\mathbf{n}^\text{svc}$ running in the systems. Specifically, we define the state as
\begin{equation}
    \mathbf{s} = \left[\mathbf{n}^\text{req}, \mathbf{n}^\text{svc}\right],
\end{equation}
where 
\begin{align}
    \mathbf{n}^\text{req} &= \left[n_1^\text{req},\dots,n_k^\text{req},\dots,n_K^\text{req}\right],\\
    \mathbf{n}^\text{svc} &= \left[n_1^\text{svc},\dots,n_k^\text{svc},\dots,n_K^\text{svc}\right].
\end{align}

Resource constraints are introduced to ensure the occupied resource doees not exceed the accessible resource from the InP. Then, the state space $\mathcal{S}$ is formulated as follows
\begin{equation}
\begin{aligned}
    \mathcal{S} = \left\{\mathbf{s}=\left[\mathbf{n}^\text{req}, \mathbf{n}^\text{svc}\right]: \sum_{k=1}^K r_k^X n_k^\text{svc} \leq 1, X \in \{R, C, S\}  \right\} .
\end{aligned}
\end{equation}

\subsection{Action Space}
Considering the queue mechanism within the network, possible actions in this study are not simply binary, i.e., acceptance or rejection. Instead, the action is defined by a vector $\mathbf{a}$, specifying the number of admitted requests for each slice:
\begin{equation}
    \mathbf{a} = \left[n_1^\text{act},\dots,n_k^\text{act},\dots,n_K^\text{act}\right].
\end{equation}
where $n_k^\text{act} \in \left\{0,1,\dots,n_\text{max}\right\}$, and $n_\text{max}$ denotes the maximum number of simultaneously admitted requests within a single slice. When resource is sufficient, service requests are admitted immediately upon arrival, yielding actions in one-hot vectors, e.g., $\left[1, 0,\dots,0\right]$ and $\left[0, 1,\dots,0\right]$, which could never reach the limit $n_\text{max}$. The limit can only be reached in the following case. When faced with insufficient resources, the admission policy, denoted by $\pi$, accepts service requests selectively, leading to the accumulation of particular requests in queues. Under policy $\pi$, we define the maximum remaining resource when requests within slice $i$ begin to accumulate as follows
\begin{equation}
    \mathbf{r}_\text{re}\left(i\mid\pi\right)=\left[ r_\text{re}^R\left(i\mid\pi\right),r_\text{re}^C\left(i\mid\pi\right),r_\text{re}^S\left(i\mid\pi\right)\right].
\end{equation}
If an ongoing service with high resource utilization is complete, the admission policy may admit multiple requests of the same type in one decision epoch. The maximum number of requests admitted simultaneously in the same slice is defined as
\begin{equation}
    \begin{split}
        n_\text{max} = \max_{i,j} & \min_{X} \left(\Big{\lfloor}\frac{r_\text{re}^X\left(i\mid\pi\right) + r_{j}^X}{r_{i}^X}\Big{\rfloor}\right),
    \end{split}
\end{equation}
where $i,j\in\{1, 2,\dots, K\}$ refer to slice types, $X \in \{R, C, S\}$ represents the resource type, and $\lfloor.\rfloor$ denotes the floor function. The $n_\text{max}$ is related to the admission policy $\pi$ as well as the resource utilization characteristics $\mathbf{r}_k$ of services in slice. Now that we are ready to define the action space $\mathcal{A}$ as
\begin{equation}
\begin{aligned}
    \mathcal{A} = \{\mathbf{a} = [n_1^\text{act},\dots&,n_k^\text{act},\dots,n_K^\text{act}]:\\    
  0\leq &n_k^\text{act} \leq n_\text{max},\ \forall k \in \{1, 2,\dots, K\}\} .
\end{aligned}  
\end{equation}


\subsection{Sojourn Time Distribution}
The sojourn time represents the interval between adjacent decision epochs. Decisions are typically made when system state changes. In this network, the state $\mathbf{s}$ changes due to 3 events: request arrival, request departure, and service completion. When a request arrives, it is necessary to decide whether it should be admitted. Also, when a service is completed, the occupied resource is released and checked for service requests in the waiting queue. However, request departure does not necessitate a decision. With sufficient resource, there should be no queuing requests and thus no leaving requests. When the resource is inadequate or reserved for potential services with better rewards, the departure of queuing requests neither provides additional resource nor brings more new service requests. Therefore, only request arrival and service completion are considered to be the trigger events in our model.

For a queuing system, the sojourn time until the next trigger event depends on the arrival rate $\lambda_k$, the service rate $\mu_k$, and the number of ongoing services in each slice. Since the arrival process follows a Poisson distribution and the service process follows an exponential distribution, the sojourn time in state $\mathbf{s}$ follows the exponential distribution with an expectation of $\tau(\mathbf{s})$, defined as
\begin{equation}
    \tau(\mathbf{s}) = 1\left/\left(\sum_{k=1}^K \lambda_k + n_k^\text{svc} \mu_k\right)\right..
\end{equation}
That is, the arrival of the subsequent trigger event constitutes a Poisson process with the rate $1/\tau(\mathbf{s})$.

In SMDP, the decision $\mathbf{a}$ made at state $\mathbf{s}$ may change the number of ongoing services. This implies that the sojourn time depends not only on the state but also on the action in the current decision epoch. Moreover, only valid actions, adhering to constraints from request queues and resource capacities, can alter the number of ongoing services. We can define the valid action space at state $\mathbf{s}$ as
\begin{align}
\mathcal{A}_\text{va}(\mathbf{s}) = \bigg\{\mathbf{a} = [n_1^\text{act},\dots,n_k^\text{act},\dots,n_K^\text{act}]:\qquad\qquad\qquad&\notag\\    
  0\leq n_k^\text{act}\leq n_\text{max},\ n_k^\text{act} \leq n_k^\text{req},\ \forall k \in \{1, 2,\dots, &K\},\notag\\
  \sum_{k=1}^K r_k^X (n_k^\text{svc}+n_k^\text{act}) \leq 1,\ X \in \{R, C, S\}&\bigg\} .
\end{align}

Thus, $\tau(\mathbf{s})$ is more precisely written as
\begin{equation}
    \tau(\mathbf{s},\mathbf{a}) = 
    \begin{cases}        
        1\left/\left(\sum_{k=1}^K \lambda_k +n_k^\text{svc}\mu_k+n_k^\text{act}\mu_k\right)\right. ,&\mathbf{a}\in \mathcal{A}_\text{va}(\mathbf{s}),\\
        1\left/\left(\sum_{k=1}^K \lambda_k + n_k^\text{svc} \mu_k\right)\right. , &\text{otherwise.}
    \end{cases} 
\end{equation}

\subsection{Transition Probability}
The SMDP in this model includes an embedded Poisson process to describe the arrival process of trigger events, and an embedded discrete-time Markov chain to describe state transitions when an event occurs. The transition probability of the embedded Markov chain is defined by 
\begin{equation}
    p_{\mathbf{s},\mathbf{a},\mathbf{s}^\prime}=\text{Pr}(S_{t+1}=\mathbf{s}^\prime | S_{t}=\mathbf{s},A_t =\mathbf{a}),
\end{equation}
where $\text{Pr}(.)$ represent the probability function. $p_{\mathbf{s},\mathbf{a},\mathbf{s}^\prime}$ indicates the probability that if action $\mathbf{a}$ is chosen in the present state $\mathbf{s}$, the system will be in state $\mathbf{s}^\prime$ at the next decision epoch. By denoting the action chosen under policy $\pi$ in state $\mathbf{s}$ as $\mathbf{a}_{\mathbf{s}|\pi} \in \mathcal{A}$, we can rewrite the transition probability of the embedded Markov chain $\pi$ as $p_{\mathbf{s_0},\mathbf{a}_{\mathbf{s}_0|\pi},\mathbf{s}}$. Additionally, the equilibrium probability of the embedded Markov chain, given policy $\pi$ and state $\mathbf{s}$, is defined by
\begin{equation}
\label{equili}
    \omega(\mathbf{s}|\pi) =  \sum_{\mathbf{s}_0\in \mathcal{S}} \omega(\mathbf{s}_0|\pi) p_{\mathbf{s_0},\mathbf{a}_{\mathbf{s}_0|\pi},\mathbf{s}}.
\end{equation}

\subsection{Reward Function}
The reward function is defined to reflect not only positive effects of valid actions but also penalties of invalid actions. It is provisionally formulated as
\begin{equation}
    r(\mathbf{s},\mathbf{a})=
    \begin{cases}
        \text{Reward} , &\mathbf{a}\in \mathcal{A}_\text{va}(\mathbf{s}),\\
        \text{Penalty} ,&\text{otherwise.}
    \end{cases}
\end{equation}

Specifically, consider a system aimed at maximizing the InP revenue. Let $\mathbf{c}=\left[c^r,c^S,c^C\right]$ signify the per-unit charges of radio, computing, and storage resources per unit of time. Given a valid action $\mathbf{a}$ executed at state $\mathbf{s}$, the reward of total revenue accrued until the next trigger event is defined as
\begin{equation}
    \text{Reward} = \sum_{k=1}^{K} n_k^\text{act} \langle \mathbf{r}_k,\mathbf{c}\rangle \tau(\mathbf{s},\mathbf{a}),
\end{equation}
and the penalty reflecting the missed opportunities for resource optimization until the next trigger event is defined as
\begin{equation}
    \text{Penalty} = - \delta \tau(\mathbf{s},\mathbf{a}),
\end{equation}
where $c^R,c^S,c^C,$ and $\delta$ are all nonnegative constants. 

\subsection{Problem Formulation}
In order to formulate the optimization problem, it is necessary to prove that the long-term reward of SMDP is exclusively determined by the policy $\pi$. We use $R(t)$ to represent the total rewards up to time $t$. In the following theorem, we will prove that if the embedded Markov chain associated with policy $\pi$ has no disjoint closed sets, then the long-term average reward $g(\pi)$ for the SMDP is a constant and does not depend on the initial state $\mathbf{s}_0$.

\begin{theorem}
Suppose that the embedded Markov chain associated with policy $\pi$ has no disjoint closed sets. The long-term average reward for the SMDP
\begin{equation}
    \lim _{t \rightarrow \infty} \frac{R(t)}{t}=g(\pi), 
\end{equation}
for each initial state $\mathbf{s}_0$, where the constant $g(\pi)$ is given by 
\begin{equation}
\label{averagerevenue}
    g(\pi) = \frac{\sum_{\mathbf{s} \in \mathcal{S}} r\left(\mathbf{s},\mathbf{a}_{\mathbf{s}|\pi}\right) \omega(\mathbf{s}|\pi) }{ \sum_{\mathbf{s} \in \mathcal{S}} \tau\left(\mathbf{s},\mathbf{a}_{\mathbf{s}|\pi}\right) \omega(\mathbf{s}|\pi)},
\end{equation}
where $\omega(\mathbf{s}|\pi)$ refers to the equilibrium probability of the Markov chain given policy $\pi$ and state $\mathbf{s}$.
\end{theorem}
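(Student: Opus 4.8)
The plan is to view the cumulative reward $\{R(t)\}$ as a reward process built on top of the embedded Markov renewal process and to apply the renewal--reward theorem, so that the genuine work reduces to the bookkeeping that turns per-cycle expectations into the $\omega$-weighted sums appearing in \eqref{averagerevenue}. First I would make the probabilistic structure implied by the stated Markovian property explicit: the decision epochs $0=t_0<t_1<\cdots$ with embedded state sequence $\{\mathbf{s}_n\}$, the sojourn times $\xi_n=t_{n+1}-t_n$ which are conditionally exponential with $\mathbb{E}[\xi_n\mid\mathbf{s}_n]=\tau(\mathbf{s}_n,\mathbf{a}_{\mathbf{s}_n|\pi})$, and the reward accrued on $[t_n,t_{n+1})$ whose conditional mean given $\mathbf{s}_n$ is $r(\mathbf{s}_n,\mathbf{a}_{\mathbf{s}_n|\pi})$. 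Because, by hypothesis, the embedded chain under $\pi$ has no two disjoint closed sets, it has a single closed communicating class $C$; with $\mathcal{S}$ finite, $C$ is positive recurrent, from any $\mathbf{s}_0$ the chain enters $C$ within finitely many steps almost surely, and the balance equation \eqref{equili} has a unique solution $\omega(\cdot\mid\pi)$, supported on $C$, with $\omega(\mathbf{s}^\ast\mid\pi)>0$ for a fixed reference state $\mathbf{s}^\ast\in C$.

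Next I would introduce regeneration cycles as the blocks of decision epochs delimited by successive visits of $\{\mathbf{s}_n\}$ to $\mathbf{s}^\ast$. By the strong Markov property the triples $(L,\,C_\tau,\,C_r)$ --- number of steps, elapsed time, and accrued reward over a cycle --- are i.i.d.\ from the first regeneration on, preceded by an almost surely finite delay. Positive recurrence gives $\mathbb{E}[L]=1/\omega(\mathbf{s}^\ast\mid\pi)<\infty$; since $r$ and $\tau$ are bounded on the finite state space and $\xi_n$ has bounded conditional mean, $\mathbb{E}[C_\tau]$ and $\mathbb{E}[C_r]$ are finite and $\mathbb{E}[C_\tau]>0$. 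The renewal--reward theorem \cite{tijms2003first} then yields $\lim_{t\to\infty}R(t)/t=\mathbb{E}[C_r]/\mathbb{E}[C_\tau]$ almost surely, the initial delay cycle contributing only $o(t)$ so that the limit is the same for every $\mathbf{s}_0$. To evaluate the two expectations, let $V_\mathbf{s}$ denote the expected number of visits to $\mathbf{s}$ in one cycle started from $\mathbf{s}^\ast$; the renewal--reward identity applied to the embedded chain itself identifies the stationary solution of \eqref{equili} with per-cycle visit frequencies, $V_\mathbf{s}=\omega(\mathbf{s}\mid\pi)\,\mathbb{E}[L]$, and a tower-property argument over the embedded path replaces each realized sojourn time and reward by its conditional mean $\tau(\mathbf{s}_n,\mathbf{a}_{\mathbf{s}_n|\pi})$, $r(\mathbf{s}_n,\mathbf{a}_{\mathbf{s}_n|\pi})$, giving
\begin{gather*}
\mathbb{E}[C_\tau]=\mathbb{E}[L]\sum_{\mathbf{s}\in\mathcal{S}}\tau(\mathbf{s},\mathbf{a}_{\mathbf{s}|\pi})\,\omega(\mathbf{s}\mid\pi),\\
\mathbb{E}[C_r]=\mathbb{E}[L]\sum_{\mathbf{s}\in\mathcal{S}}r(\mathbf{s},\mathbf{a}_{\mathbf{s}|\pi})\,\omega(\mathbf{s}\mid\pi).
\end{gather*}
Dividing, $\mathbb{E}[L]$ cancels and \eqref{averagerevenue} follows.

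The main obstacle is this last evaluation, which silently bundles three points that each deserve a line: the identity $V_\mathbf{s}=\omega(\mathbf{s}\mid\pi)\mathbb{E}[L]$ (the only place uniqueness of the solution of \eqref{equili} is really consumed), the tower-property interchange forced by the fact that the sojourn times and rewards are only \emph{conditionally} specified given the embedded state, and the argument that the delayed first cycle is asymptotically negligible so $g(\pi)$ is independent of $\mathbf{s}_0$. If I preferred to sidestep the regenerative machinery, the alternative is to apply the ergodic theorem for finite Markov chains to the empirical averages $\frac1N\sum_{n<N}r(\mathbf{s}_n,\mathbf{a}_{\mathbf{s}_n|\pi})$ and $\frac1N\sum_{n<N}\tau(\mathbf{s}_n,\mathbf{a}_{\mathbf{s}_n|\pi})$, upgrade the second to $t_N/N\to\sum_{\mathbf{s}}\tau(\mathbf{s},\mathbf{a}_{\mathbf{s}|\pi})\omega(\mathbf{s}\mid\pi)$ via a martingale strong law applied to $\sum_{n<N}\bigl(\xi_n-\tau(\mathbf{s}_n,\mathbf{a}_{\mathbf{s}_n|\pi})\bigr)$, and then pass from the epoch index $N$ to continuous time through the sandwich $t_{N(t)}\le t<t_{N(t)+1}$ with $N(t)\to\infty$; taking the quotient of the two limits again gives $g(\pi)$.
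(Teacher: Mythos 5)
Your proposal is correct and follows essentially the same route as the paper: both arguments fix a regeneration state, invoke the renewal--reward theorem to reduce $\lim_{t\to\infty} R(t)/t$ to a ratio of per-cycle expectations, and then identify those expectations with the $\omega$-weighted sums in \eqref{averagerevenue}. The only cosmetic difference is that the paper performs the final identification via the Ces\`aro limit of the $i$-step transition probabilities rather than via the per-cycle visit counts $V_{\mathbf{s}}=\omega(\mathbf{s}\mid\pi)\,\mathbb{E}[L]$, and your explicit handling of the (possibly transient) initial delay cycle is in fact more careful than the paper's choice of $\mathbf{s}_0$ itself as the regeneration state.
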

\begin{proof}
The proof of Theorem 1 is given in Appendix A.
\end{proof}

Now, the long-term average reward maximization problem is formulated as
\begin{equation}
\begin{split}
     \max _\pi \ &g(\pi)=\frac{\sum_{\mathbf{s} \in \mathcal{S}} r\left(\mathbf{s},\mathbf{a}_{\mathbf{s}|\pi}\right) \omega(\mathbf{s}|\pi) }{ \sum_{\mathbf{s} \in \mathcal{S}} \tau\left(\mathbf{s},\mathbf{a}_{\mathbf{s}|\pi}\right) \omega(\mathbf{s}|\pi)}\\
     \text { s.t. }\ &\sum_{\mathbf{s}\in \mathcal{S}} \omega(\mathbf{s}|\pi) = 1.
\end{split}  
\end{equation}

Due to the intricacy of the wireless network and diversified network services, the determination of equilibrium probability under policy $\pi$ is not straightforward. Therefore, we employ DRL to optimize policy and leverage neural networks to process extensive high-dimensional network data. Moreover, DT is utilized to assist in the training stage of the DRL model.

\section{DT-assisted Online DRL Solution}

\begin{figure*}[!t]
\centering
\includegraphics[width=1\textwidth]{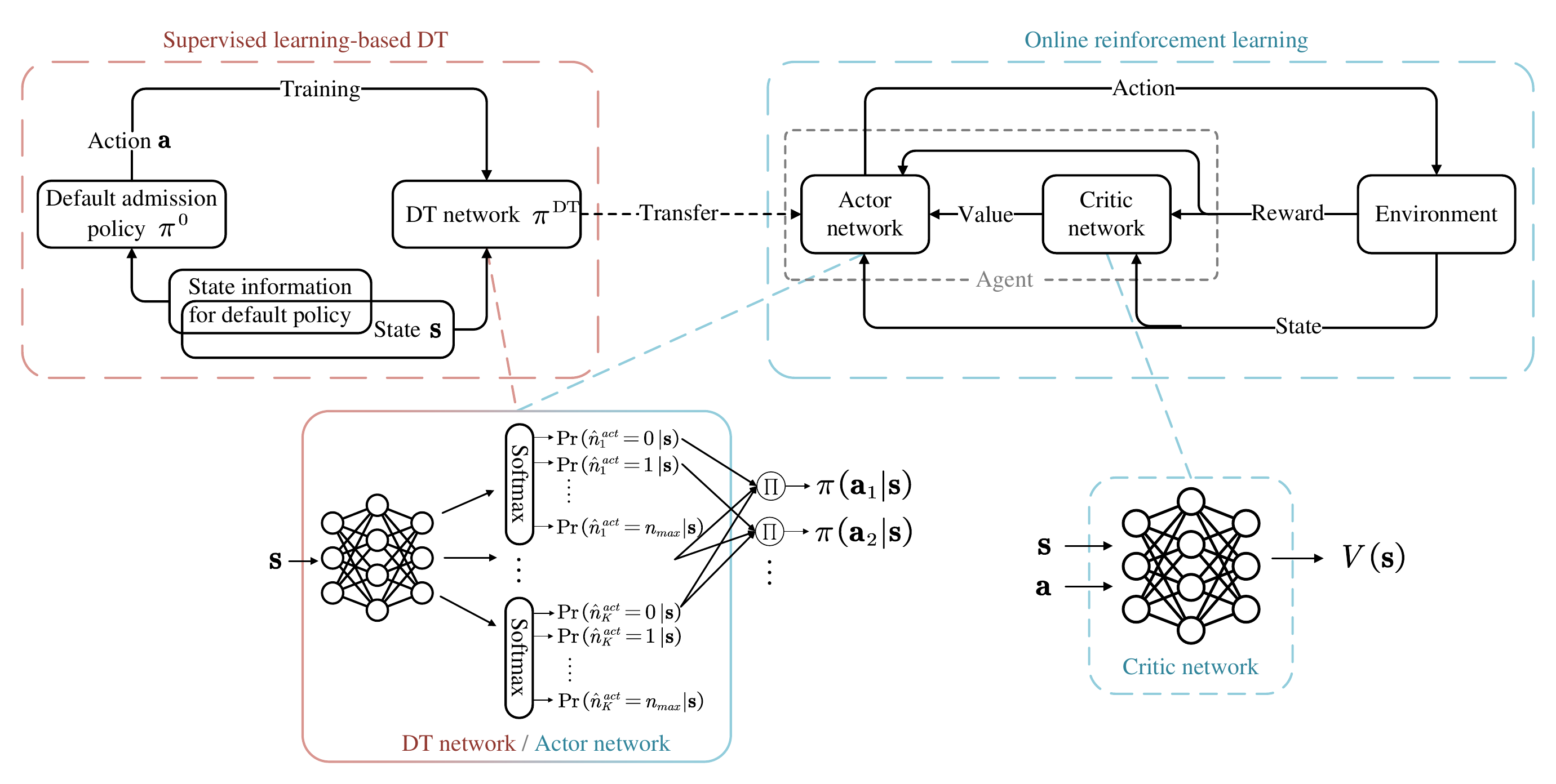}
\caption{Framework of DT-assisted online DRL solution}
\label{sche}
\end{figure*}

Due to the considerable stochasticity of the DRL model during its early training stage, the DRL methods can hardly be directly implemented in the real network. To settle this issue, we propose a DT-assisted online DRL solution in Fig. \ref{sche}. Our solution assumes that the real network has a default admission policy before employing DRL, even elementary ones such as the greedy policy. A neural network-based policy-level DT is established for this default admission policy through supervised learning. Once a network admission decision is made, training data, including the input data, e.g., network state $\mathbf{s}$, and the label, executed action $\mathbf{a}$, are collected in the form of a state-action pair $[\mathbf{s},\mathbf{a}]$. It is noteworthy that the network state information utilized by the default policy may not exactly be the same as the collected state $\mathbf{s}$. For example, the greedy policy depends only on the queuing requests $\mathbf{n}^\text{req}$ and the available resource, the latter of which is not within the defined network state $\mathbf{s}$. Nevertheless, these resources can be inferred from the ongoing services $\mathbf{n}^\text{svc}$ and the constant resource utilization vectors $\mathbf{r^{k}}$. We let the neural network learn such relations through training. During the establishment of DT, training data is collected by monitoring state and policy behavior in the real network, while the training process for DT is isolated from real networks, thus guaranteeing uninterrupted network operations.

After comprehensive learning, the DT network, capable of faithfully replicating the default admission policy, assists in training the DRL agent through transfer learning. Considering the similar functionalities of the DT network and the actor network in actor-critic DRL model, that is, parameterizing the policy through neural networks, we employ the actor-critic DRL model in this solution for a direct and efficient knowledge transfer. The neural network-based policy, essentially realizing the mapping from states to actions, is represented as $\pi(\mathbf{a}|\mathbf{s};\boldsymbol{\theta})$, signifying the probability of each action $\mathbf{a}$ in current state $\mathbf{s}$ with neural network parameter $\boldsymbol{\theta}$. Usually, the action with the highest probability is selected as the ultimate decision in the policy. The transfer learning enables the DRL agent to initially acquire a suboptimal default policy through the DT network and subsequently fine-tune towards the optimization goal using a pre-designed reward, thus mitigating the instability associated with training from scratch.

While the network state and action in the admission control task have been explicitly defined in the system model, the uncertain sojourn time in SMDP engenders highly variable rewards even with a fixed state and action. This results in confusion for the agent, hindering its ability to converge and optimize the policy. Fortunately, a transformation method \cite{tijms2003first} can be utilized to convert the SMDP into an equivalent discrete-time MDP such that for each stationary policy the long-term average reward in the discrete-time MDP is the same as that in the SMDP. The equivalent discrete-time MDP $\left\langle \Bar{\mathcal{S}}, \Bar{\mathcal{A}}, \Bar{\mathcal{P}}, \Bar{\mathcal{R}}\right\rangle$ is defined as
\begin{numcases}{}
    \Bar{\mathcal{S}} = \mathcal{S};\label{state} \\
    \Bar{\mathcal{A}} = \mathcal{A};\\
    \Bar{r}(\mathbf{s},\mathbf{a}) = r(\mathbf{s},\mathbf{a})/\tau(\mathbf{s},\mathbf{a}), \quad\quad\ \, \mathbf{a}\in \mathcal{\Bar{\mathcal{A}}}\text{ and }\mathbf{s} \in \Bar{\mathcal{S}};\label{rewardbar}\\
    \Bar{p}_{\mathbf{s},\mathbf{a},\mathbf{s}^\prime} = 
    \begin{cases}
        (\tau_0/\tau(\mathbf{s},\mathbf{a}))p_{\mathbf{s},\mathbf{a},\mathbf{s}^\prime}, \\ 
        \quad\quad\quad\quad\mathbf{s}\neq \mathbf{s^\prime}, \mathbf{a}\in \mathcal{\Bar{\mathcal{A}}} \text{ and } \mathbf{s}, \mathbf{s^\prime} \in \Bar{\mathcal{S}};\\
        (\tau_0/\tau(\mathbf{s},\mathbf{a}))p_{\mathbf{s},\mathbf{a},\mathbf{s}^\prime}+\left(1- (\tau/\tau(\mathbf{s},\mathbf{a}))\right),\\ 
        \quad\quad\quad\quad\mathbf{s} = \mathbf{s^\prime}, \mathbf{a}\in \mathcal{\Bar{\mathcal{A}}} \text{ and } \mathbf{s}, \mathbf{s^\prime} \in \Bar{\mathcal{S}}; 
    \end{cases} \label{transitionp}     
\end{numcases}
where $\tau_0$ is a constant with $0<\tau_0\leq \min _{\mathbf{s},\mathbf{a}} \tau(\mathbf{s},\mathbf{a})$. In the following theorem, we prove the equivalence, in terms of long-term average reward, between this discrete-time MDP and the original MDP.

\begin{theorem}
Given the embedded Markov chain associated with policy $\pi$ in SMDP has no disjoint closed sets, we have:
\begin{equation}
    g(\pi)=\bar{g}(\pi),
\end{equation}
where $g(\pi)$ and $\bar{g}(\pi)$ is the long-term average reward for SMDP and its equivalent discrete MDP.
\end{theorem}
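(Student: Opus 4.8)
The plan is to reduce both sides to closed-form expressions in the equilibrium distribution and match them. By Theorem~1 applied to the SMDP, $g(\pi)$ equals the ratio in \eqref{averagerevenue}. Applying the same theorem to the equivalent discrete-time MDP, where the sojourn time is identically one, yields $\bar g(\pi)=\sum_{\mathbf{s}\in\mathcal S}\bar r(\mathbf{s},\mathbf{a}_{\mathbf{s}|\pi})\,\bar\omega(\mathbf{s}\mid\pi)$, with $\bar\omega(\cdot\mid\pi)$ the equilibrium distribution of the transformed embedded chain $\{\bar p_{\mathbf{s},\mathbf{a}_{\mathbf{s}|\pi},\mathbf{s}'}\}$ from \eqref{transitionp}. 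So the entire proof comes down to identifying $\bar\omega$ in terms of $\omega$ and then using $\bar r=r/\tau$.

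First I would verify that \eqref{transitionp} defines a genuine stochastic matrix: summing over $\mathbf{s}'$ gives $(\tau_0/\tau(\mathbf{s},\mathbf{a}_{\mathbf{s}|\pi}))\sum_{\mathbf{s}'}p_{\mathbf{s},\mathbf{a}_{\mathbf{s}|\pi},\mathbf{s}'}+\bigl(1-\tau_0/\tau(\mathbf{s},\mathbf{a}_{\mathbf{s}|\pi})\bigr)=1$, and the standing assumption $0<\tau_0\le\min_{\mathbf{s},\mathbf{a}}\tau(\mathbf{s},\mathbf{a})$ keeps the self-loop term $1-\tau_0/\tau(\mathbf{s},\mathbf{a}_{\mathbf{s}|\pi})$ nonnegative (I read the second case of \eqref{transitionp} as containing $1-\tau_0/\tau(\mathbf{s},\mathbf{a})$). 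Since the off-diagonal entries are the original ones scaled by the strictly positive factor $\tau_0/\tau(\mathbf{s},\mathbf{a}_{\mathbf{s}|\pi})$, the transformed chain has exactly the same closed sets of states as the original one (adding self-loops cannot create new closed sets), so it again has no disjoint closed sets and hence a unique equilibrium distribution.

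The key step is to show that
\[
\bar\omega(\mathbf{s}\mid\pi)=\frac{\omega(\mathbf{s}\mid\pi)\,\tau(\mathbf{s},\mathbf{a}_{\mathbf{s}|\pi})}{\sum_{\mathbf{s}'\in\mathcal S}\omega(\mathbf{s}'\mid\pi)\,\tau(\mathbf{s}',\mathbf{a}_{\mathbf{s}'|\pi})}
\]
solves the balance equations $\bar\omega(\mathbf{s}'\mid\pi)=\sum_{\mathbf{s}}\bar\omega(\mathbf{s}\mid\pi)\,\bar p_{\mathbf{s},\mathbf{a}_{\mathbf{s}|\pi},\mathbf{s}'}$ together with $\sum_{\mathbf{s}}\bar\omega(\mathbf{s}\mid\pi)=1$. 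Substituting the candidate $\bar\omega$ and the definition of $\bar p$, the factors $\tau(\mathbf{s},\mathbf{a}_{\mathbf{s}|\pi})$ cancel in the off-diagonal terms, leaving $\tau_0\sum_{\mathbf{s}}\omega(\mathbf{s}\mid\pi)\,p_{\mathbf{s},\mathbf{a}_{\mathbf{s}|\pi},\mathbf{s}'}$; the self-loop contribution supplies the residual $\omega(\mathbf{s}'\mid\pi)\bigl(\tau(\mathbf{s}',\mathbf{a}_{\mathbf{s}'|\pi})-\tau_0\bigr)$; and invoking the original equilibrium relation \eqref{equili} collapses the first sum to $\tau_0\,\omega(\mathbf{s}'\mid\pi)$, so the total is $\omega(\mathbf{s}'\mid\pi)\,\tau(\mathbf{s}',\mathbf{a}_{\mathbf{s}'|\pi})$ up to the common normalizing constant, i.e.\ exactly $\bar\omega(\mathbf{s}'\mid\pi)$. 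By uniqueness this is the equilibrium distribution of the transformed chain.

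Finally, plugging $\bar r(\mathbf{s},\mathbf{a})=r(\mathbf{s},\mathbf{a})/\tau(\mathbf{s},\mathbf{a})$ and the formula for $\bar\omega$ into $\bar g(\pi)=\sum_{\mathbf{s}}\bar r(\mathbf{s},\mathbf{a}_{\mathbf{s}|\pi})\,\bar\omega(\mathbf{s}\mid\pi)$, the $\tau$'s cancel once more and the normalizing denominator reappears, producing precisely the ratio \eqref{averagerevenue}, so $\bar g(\pi)=g(\pi)$. I expect the only real obstacle to be the balance-equation verification in the key step — specifically, handling the self-loop term correctly and ensuring the appeal to \eqref{equili} is valid over the full state space $\mathcal S$ — while the stochasticity check and the closing substitution are routine.
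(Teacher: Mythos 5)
Your proposal is correct and follows essentially the same route as the paper's Appendix B: the paper likewise establishes the proportionality $\omega(\mathbf{s}|\pi)\propto\bar\omega(\mathbf{s}|\pi)/\tau(\mathbf{s},\mathbf{a}_{\mathbf{s}|\pi})$ by manipulating the balance equation of the transformed chain into that of the embedded chain, fixes the normalizing constant as $\sum_{\mathbf{s}}\tau(\mathbf{s},\mathbf{a}_{\mathbf{s}|\pi})\,\omega(\mathbf{s}|\pi)$, and then substitutes $\bar r=r/\tau$ to recover the ratio in \eqref{averagerevenue}. You merely run the key step in the opposite direction (propose the candidate $\bar\omega$ and verify it, invoking uniqueness) and add the routine stochasticity and closed-set checks that the paper leaves implicit.
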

\begin{proof}
The proof is given in Appendix B.
\end{proof}

Note that the embedded Markov chain in the SMDP of the system model is a unichain for all stationary policies $\pi$, satisfying the equivalence requirement in \textbf{Theorem 2}. Finally, the DRL agent is able to refine the admission control policy through the equivalent reward function expressed as
\begin{equation}\label{rewardbar_1}
    \Bar{r}(\mathbf{s},\mathbf{a}) = r(\mathbf{s},\mathbf{a})/\tau(\mathbf{s},\mathbf{a}) =     \begin{cases}
        \sum_{k=1}^{K} n_k^\text{act} \langle \mathbf{r}_k,\mathbf{c}\rangle, &\mathbf{a}\in \mathcal{A}_\text{va}(\mathbf{s}),\\
        - \delta,&\text{otherwise.}
    \end{cases}
\end{equation}

In terms of the model structure, the DT network mainly consists of a standard multilayer feed-forward network (FFN), with targeted modifications to the output layer for this task. In conventional approaches, the output layer generates values representing all actions in the action space $\mathcal{A}$, which are subsequently transformed into predicted probabilities via a softmax activation function. This structure proves concise and effective for systems without request queues, where the action space includes solely acceptance and rejection options. However, when dealing with tasks involving request queues, the number of potential actions escalates to $(n_\text{max}+1)^K$, posing challenges in training an effective network. In addition, the conventional structure ignores the inherent relationships among the predicted probabilities of different values for a single variable within the action vector, that is,
\begin{equation}
    \sum_{n=0}^{n_\text{max}} \text{Pr}(\hat{n}_k^\text{act} = n|\mathbf{s}) = 1,\quad \forall k \in \{1, 2,\dots, K\},
\end{equation}
where $\hat{n}_k^\text{act}$ denotes the $k$-th predicted value in the action vector. To settle this problem, we have the output layer separately compute the predicted probability of different values for each variable, rather than for each action. In this new structure, the probability of action $\mathbf{a}$ is derived from the product of the probability for each variable, as expressed by:
\begin{equation}
\pi^\text{DT}(\mathbf{a}|\mathbf{s}) = \prod_{k=1}^K \text{Pr}(\hat{n}_k^\text{act} = {n}_k^\text{act}|\mathbf{s}),\quad\mathbf{a} = \left[n_1^\text{act},\dots,n_K^\text{act}\right],
\end{equation}
which is used in backpropagation and parameter updating in the training phase. During the prediction phase, variables are determined through a greedy algorithm or probability-based sampling, then concatenated to construct the predicted action vector $\mathbf{\hat{a}}$. This modification reduces the number of nodes in the output layer from $(n_\text{max}+1)^K$ to $K(n_\text{max}+1)$, substantially alleviating the training challenges of prohibitively high computational complexity.

Due to the consistent functionality between the DT network and the actor network, we directly integrate the DT network structure as the actor network to parameterize the agent policy by $\pi(\mathbf{a}|\mathbf{s};\boldsymbol{\theta})$. The critic network is established following the conventional form in the A2C algorithm. To introduce the A2C algorithm, we begin by defining the state and action value functions:
\begin{align}
V(\mathbf{s}) & =\mathbb{E}\left[\sum_{t=0}^{\infty} \gamma^t \bar{r}\left({\mathbf{s}_t},\mathbf{a}_t\right) \bigg| \mathbf{s_0}=\mathbf{s}\right], \\
Q(\mathbf{s}, \mathbf{a}) & =\mathbb{E}\left[\sum_{t=0}^{\infty} \gamma^t \bar{r}\left({\mathbf{s}_t},\mathbf{a}_t\right) \bigg| \mathbf{s_0}=\mathbf{s}, \mathbf{a_0}=\mathbf{a}\right],
\end{align}
where $\gamma$ is the discount factor that represents how far future rewards are taken into account at this moment. The state value function describes the cumulative rewards initiated from the current state $\mathbf{s}$, while the action value function additionally considers the impact of the current action $\mathbf{a}$ on the cumulative rewards. When the next state is identified as $\mathbf{s}^\prime$, we can rewrite the action value function using the one-step reward and the state value function as follows
\begin{equation}
\begin{split}
    Q(\mathbf{s}, \mathbf{a}) &= \bar{r}\left(\mathbf{s},\mathbf{a}\right) + \mathbb{E}\left[\sum_{t=1}^{\infty} \gamma^t \bar{r}\left({\mathbf{s}_t},\mathbf{a}_t\right) \bigg| \mathbf{s_1}=\mathbf{s}^\prime\right]\\   
    & = \bar{r}\left(\mathbf{s},\mathbf{a}\right) + \gamma V(\mathbf{s}^\prime),
\end{split}
\end{equation}
The advantage function, indicating the degree to which the action $\mathbf{a}$ performs better or worse than the average action in state $\mathbf{s}$, is defined as
\begin{align}
    A(\mathbf{s},\mathbf{a}) &= Q(\mathbf{s}, \mathbf{a}) - V(\mathbf{s})\notag\\
    &= \bar{r}\left(\mathbf{s},\mathbf{a}\right) + \gamma V(\mathbf{s}^\prime) - V(\mathbf{s}).
\end{align}
As a result, we can use a single critic network to parameterize the state value function and calculate the advantage function for the current action. The critic network consists of a multilayer FFN and a one-node output layer with inherent parameters $\boldsymbol{\theta}_V$, while the parameterized state value function is denoted as $V(\mathbf{s};\boldsymbol{\theta}_V)$.

The training strategy of our proposed solution can be divided into two stages. The first is the supervised learning stage for the DT network. We consider the admission control as a classification task, where $\mathbf{s}$ and $\mathbf{a}$ in the dataset serve as the input and label respectively. The cross-entropy loss is employed to train the DT network for replicating the default network policy, which is mathematically defined as
\begin{equation}
\label{celoss}
\mathcal{L}_{\boldsymbol{\theta}_\text{DT}}=-  \log \pi^\text{DT}(\mathbf{a}|\mathbf{s};\boldsymbol{\theta}_\text{DT}).
\end{equation}
The training process of the DT network is described in \textbf{Algorithm \ref{al1}}. 

\begin{algorithm}[t]
	\caption{Supervised learning for DT network}\label{al1}
        Collect network state $\mathbf{s}$ and default policy action $\mathbf{a}$ each time the network makes an admission decision.
        
        Construct datasets for training and validation. 
        
        Initialize the DT network with random parameters $\boldsymbol{\theta}_\text{DT}$.
        
	\For{$episode \leftarrow 1$ to $T$}{
        Calculate cross-entropy loss $\mathcal{L}_{\boldsymbol{\theta}_\text{DT}}$ according to (\ref{celoss}) on training dataset.
        
        Update $\boldsymbol{\theta}_\text{DT}$ via gradient descent on $\mathcal{L}_{\boldsymbol{\theta}_\text{DT}}$.
        
        Check the average cross-entropy loss and predictive accuracy of the DT network on validation dataset.
	}
\end{algorithm}

The following stage is DT-assisted online DRL training. The state value function satisfies the Bellman equation and can be recursively defined as
\begin{equation}\label{vfunc}
    V(\mathbf{s}) =\mathbb{E}\left[ \bar{r}\left(\mathbf{s},\mathbf{a}\right) + \gamma V(\mathbf{s}^\prime)\right].
\end{equation}
Therefore, the loss function for the critic network with the parameter $\boldsymbol{\theta}_V$ takes the following form:
\begin{equation}\label{loss1}
    \mathcal{L}_{\boldsymbol{\theta}_V} =\left(\bar{r}\left(\mathbf{s},\mathbf{a}\right) + \gamma V(\mathbf{s}^\prime;\boldsymbol{\theta}_V) -V(\mathbf{s};\boldsymbol{\theta}_V)\right)^2.
\end{equation} 
Meanwhile, the loss function for the actor network is defined as
\begin{equation}\label{loss2}
    \mathcal{L}_{\boldsymbol{\theta}} = \log \pi(\mathbf{a}|{\mathbf{s}} ; \boldsymbol{\theta})A(\mathbf{s},\mathbf{a}),
\end{equation}
in order to optimize the policy by favoring actions with higher advantages, and thereby maximize long-term rewards.

The training of both networks is realized through continuous interaction with the real network. Specifically, given the current state $\mathbf{s}$, the actor network makes an action decision $\mathbf{a}$ under its policy $\pi(\mathbf{a}|{\mathbf{s}})$. The network implements this chosen action, providing feedback in the form of reward $\bar{r}(\mathbf{s},\mathbf{a})$ and the next state $\mathbf{s}^\prime$. Variables $\mathbf{s}$, $\bar{r}(\mathbf{s},\mathbf{a})$, and $\mathbf{s}^\prime$ are used to calculate the loss functions defined in (\ref{loss1}) and (\ref{loss2}), adjusting parameters via gradient descent. 

In order to stabilize the DRL model, we perform the initialization with $\boldsymbol{\theta} = \boldsymbol{\theta}_\text{DT}$ to transfer the parameter in the DT network to the actor network before training. However, the parameters within the critic network are randomly initialized and will disrupt the actor network. To settle this issue, we adopt a two-step training approach to prevent the stable policy from returning stochastic. Firstly, we freeze the actor network and individually train the critic network. In case the DT network faithfully replicates the default admission policy, the training of the critic network does not disrupt the normal operation of the real network, as the policy within the actor network remains unchanged. This training stage persists until the critic network achieves a relatively accurate approximation of the state value function $V(\mathbf{s};\boldsymbol{\theta}_V)$. After that, we unfreeze the actor network and simultaneously train both networks, maximizing long-term rewards through the fine-tuning of $\boldsymbol{\theta}$ and $\boldsymbol{\theta}_V$. A detailed description of this process is provided in \textbf{Algorithm \ref{al2}}.

\begin{algorithm}[t]
	\caption{DT-assisted Online DRL solution}\label{al2}

        \tcp{ Step 1: Train DT network}
            
        Implement the DT network through \textbf{Algorithm \ref{al1}}

	\tcp{Step 2: Train critic network} 
        Initialize actor network as a copy of DT network with parameter $\boldsymbol{\theta}=\boldsymbol{\theta}_\text{DT}$
        
        Initialze critic network with random parameter $\boldsymbol{\theta}_V$

        \For{$episode \leftarrow 1$ to $T$}{
            Get state $\mathbf{s}$ from environment.
            
            Perform action $\mathbf{a}$ according to policy $\pi(\mathbf{a}|\mathbf{s} ; \boldsymbol{\theta})$.

            Get the next state $\mathbf{s}^\prime$, reward $r(\mathbf{s},\mathbf{a})$ from environment.

            Assess state values $V(\mathbf{s};\boldsymbol{\theta}_V)$ and $V(\mathbf{s}^\prime;\boldsymbol{\theta}_V)$ through critic network.

            Calculated $\mathcal{L}_{\boldsymbol{\theta}_V}$  according to (\ref{loss1}) and update ciritc network parameter $\boldsymbol{\theta}_V$ by performing gradient descent.        
        }

        \tcp{Step 3: Train both actor and critic networks} 
        
        \For{$episode \leftarrow 1$ to $T$}{
            Get state $\mathbf{s}$ from environment.
            
            Perform action $\mathbf{a}$ according to policy $\pi(\mathbf{a}|\mathbf{s} ; \boldsymbol{\theta})$.

            Get the next state $\mathbf{s}^\prime$, reward $r(\mathbf{s},\mathbf{a})$ from environment.

            Assess state values $V(\mathbf{s};\boldsymbol{\theta}_V)$ and $V(\mathbf{s}^\prime;\boldsymbol{\theta}_V)$ through critic network.
            
            Calculated $\mathcal{L}_{\boldsymbol{\theta}}$  according to (\ref{loss2}) and update actor network parameter $\boldsymbol{\theta} $ via gradient descent.
             
            Calculated $\mathcal{L}_{\boldsymbol{\theta}_V}$  according to (\ref{loss1}) and update ciritc network parameter $\boldsymbol{\theta}_V $ via gradient descent.  
        }
\end{algorithm}

\section{Experimental Evaluation}

\subsection{Experiment Setting}
The simulation of the network system, DT network, and DRL model in this study are implemented based on Python 3.9, Pytorch 1.10, CUDA 11.3, and Numpy. The experimentation is performed on a commercial PC (i7-12700KF CPU, Windows 11 64-bit operating system, and 32 GB RAM) with a dedicated GPU (NVIDIA GeForce RTX 3080). 

\renewcommand{\arraystretch}{1.2}
\begin{table}[]
    \centering
    \caption{Environment settings}
    \begin{tabular}{cccc}
        \toprule \textbf{Symbol} & \textbf{Value}& \textbf{Symbol} & \textbf{Value}\\
        \midrule 
        $K$ & 4 & $n_\text{max} $ & 3\\
        $\lambda_1 $ &  4 &  $\lambda_2 $  &  3.6  \\
        $\lambda_3 $ &  3.2 &  $\lambda_4 $  &  2.8  \\
        $1/\mu_1 $ &  3.2 &  $1/\mu_2 $  &  4  \\
        $1/\mu_3 $ &  1.6 &  $1/\mu_4 $  &  2.4  \\
        $T_1 $ &  0.8 &  $T_2 $  &  1  \\
        $T_3 $ &  0.2 &  $T_4 $  &  0.6  \\
        $\mathbf{r_1}$ &  $[0.02,0.03,0.04]$ &  $\mathbf{r_2}$  &  $[0.04,0.02,0.016]$   \\
        $\mathbf{r_3}$ &  $[0.016,0.04,0.016]$ &  $\mathbf{r_4}$ &  $[0.024,0.024,0.024]$   \\

        \bottomrule
    \end{tabular}
    \label{tab0}
\end{table}

The parameter setting of the network environment is outlined in Table \ref{tab0}. As previously discussed in Section \ref{sec3}, the slices encompass mMTC, eMBB, URLLC, and other, which correspond to 1, 2, 3, and 4 in the table. Parameters for each slice are determined based on their respective features. For example, the URLLC service shows the shortest mean service time $1/\mu_3 = 1.6$ and hold time $T_3 = 0.2$, as well as the maximum computing resource utilization $r_3^C=0.04$. In contrast, the services in mMTC and eMBB slices exhibit the highest utilization of storage resource $r_1^S=0.04$ and radio resource $r_2^R=0.04$ respectively. 

\begin{table}[]
    \centering
    \caption{Training settings}
    \begin{tabular}{cc}
        \toprule \textbf{Symbol} & \textbf{Value}\\
        \midrule 
        Dimension of models & 64 \\
        Number of layers  & 3 \\
        Batch size for DT & 64 \\
        Learning rate for DT& 1e-4\\
        Learning rate for critic & 1e-4\\
        Learning rate for actor & 4e-4\\
        $\gamma$ in calculation of $A(\mathbf{s},\mathbf{a})$ &0.99\\       
        \bottomrule
    \end{tabular}
    \label{tab1}
\end{table}

In terms of the models, we choose the FFN with 3 layers and 64 nodes within each layer. The dimension of FFN is identical in all three networks including DT network, actor network, and critic network. In the supervised learning phase for the DT network, we construct a dataset with collected data and then train the network, thus we can employ batch training with a batch size of 64 to reduce the fluctuations. On the contrary, during the training of actor and critic networks, only one set of data can be obtained per decision epoch, so we use a batch size of 1 in this scenario. The additional training configurations can be found in Table \ref{tab1}. The pre-trained actor network necessitates a relatively higher learning rate to deviate from the original policy, therefore the learning rate for the actor network exceeds that for the critic network in the configuration. 

Three distinct default admission policies are chosen in our experiment to comprehensively evaluate the performance of our solution. The first policy employs a heuristic algorithm considering priority (defined as URLLC $>$ eMBB $>$ mMTC $>$ other in our experiment) and fairness among different slices, as detailed in \cite{dai2022psaccf}. We shall abbreviate this policy as PRIO throughout the remainder of this paper. The second one uses integer linear programming (ILP) to maximize the radio resource utilization at each decision epoch \cite{haque20225g}. The third one employs a straightforward greedy algorithm that accepts requests based on the decreasing order of radio resource occupation.

Furthermore, we employ the state-of-the-art Dueling-DQN method for comparative analysis against our proposed DRL approach. The Dueling-DQN model is configured with a similar architecture comprising three layers, each containing 64 nodes. In the output layer, we retain its conventional structure, aligning the number of nodes with the count of potential actions, calculated as $(n_\text{max}+1)^K = 256$, as opposed to the proposed modified output layer structure.

To fully demonstrate the effectiveness of our proposed approach, we select a different optimization goal - maximizing revenue from storage resource charges. The reward is calculated by (\ref{rewardbar_1}), with the charge vector $\mathbf{c}=[0,0,100]$.

\subsection{Evaluation Results}
\subsubsection{Supervised Learning-based DT Performance Evaluation}
\begin{figure}[!t]
\centering
\includegraphics[width=0.5\textwidth]{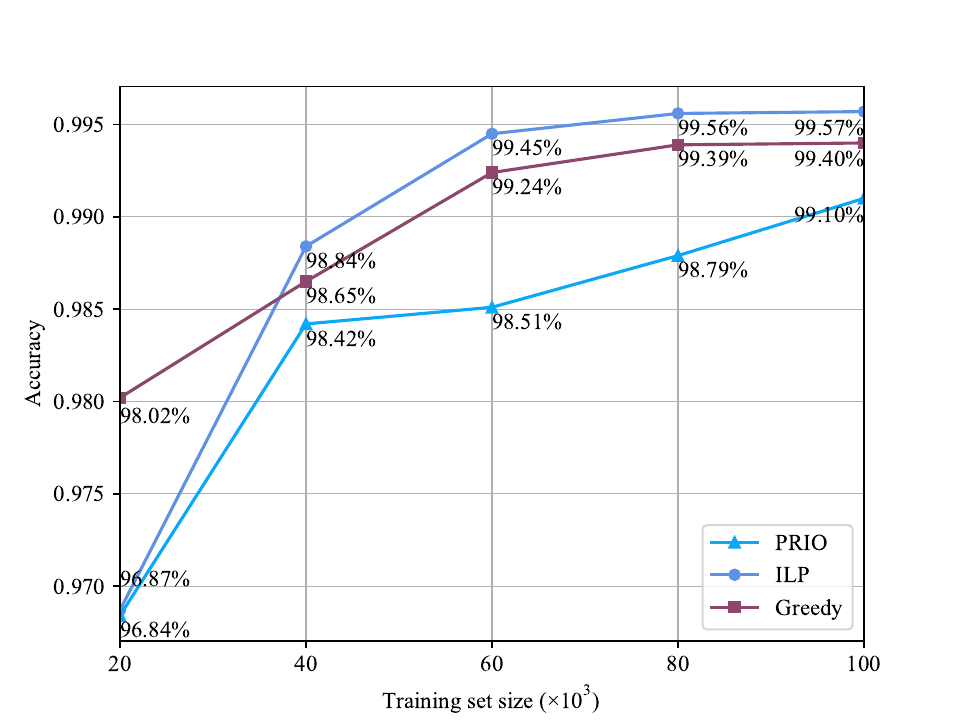}
\caption{Predicive accuracy of DT network with different default policies}
\label{DTacc}
\end{figure}
We configured the training epochs for our DT network as 400. To prevent overfitting, we employed the early-stopping technique with a patience of 20 epochs. Figure \ref{DTacc} illustrates the predictive accuracy of the DT network on the test set under different default admission policies and varying training set sizes. The results reveal a positive correlation between predictive accuracy and training set size, with accuracy stabilizing as the training sample size increases. Notably, when the training set size reaches 100,000 samples, the predictive accuracy of the DT network exceeds 99\% for all three policies, indicating a faithful replication of the default admission policies. As discussed in the last section, the process of collecting training samples does not disrupt the normal operation of network systems. Consequently, we employ the DT network trained on a 100,000-sample dataset for subsequent experiments.



\subsubsection{DT-assisted DRL Performance Evaluation}
The performance of an admission policy can be analyzed across three dimensions: cumulative rewards, resource utilization, and the acceptance ratio of requests within different slices \cite{van2019optimal,villota2022admission,bega2019machine}. In this study, during the training phase, we compare resource utilization and acceptance ratio among different methods to assess their impact on the network system. After training completion, cumulative rewards are used to check whether the optimization goal has been achieved.

\begin{figure}[!t]
\centering
\subfloat[PRIO]{\includegraphics[width=0.248\textwidth]{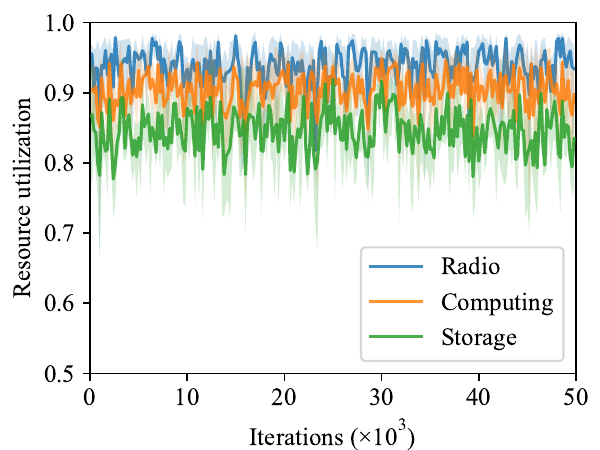}%

}
\subfloat[PRIO]{\includegraphics[width=0.248\textwidth]{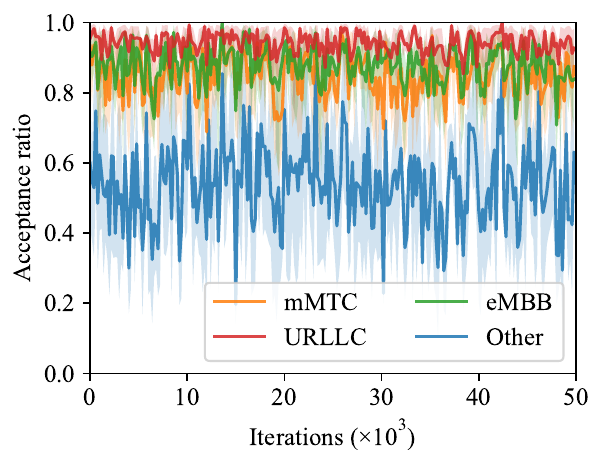}%
\label{PRIOAR}}

\subfloat[ILP]{\includegraphics[width=0.248\textwidth]{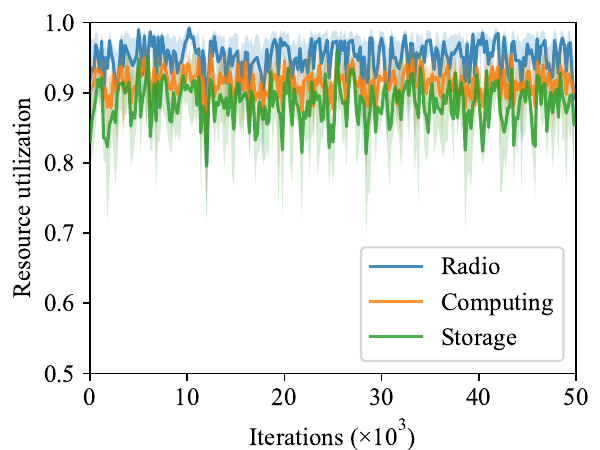}%
}
\subfloat[ILP]{\includegraphics[width=0.248\textwidth]{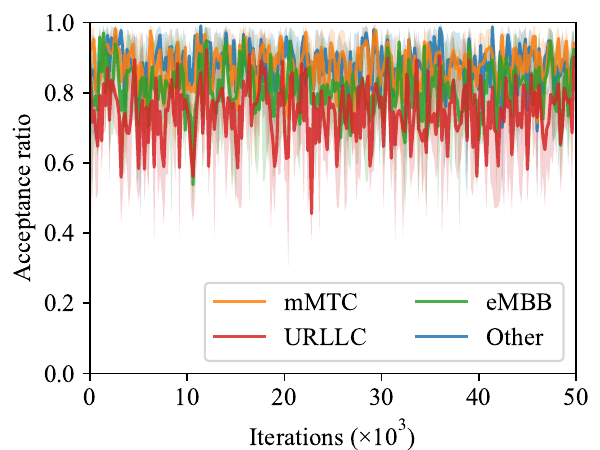}%
}

\subfloat[Greedy]{\includegraphics[width=0.248\textwidth]{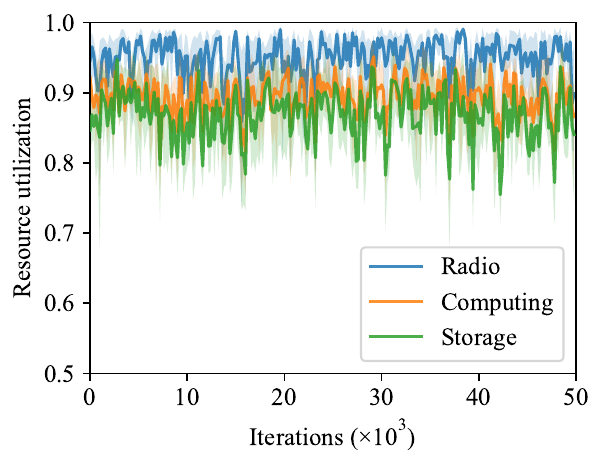}%
}
\subfloat[Greedy]{\includegraphics[width=0.248\textwidth]{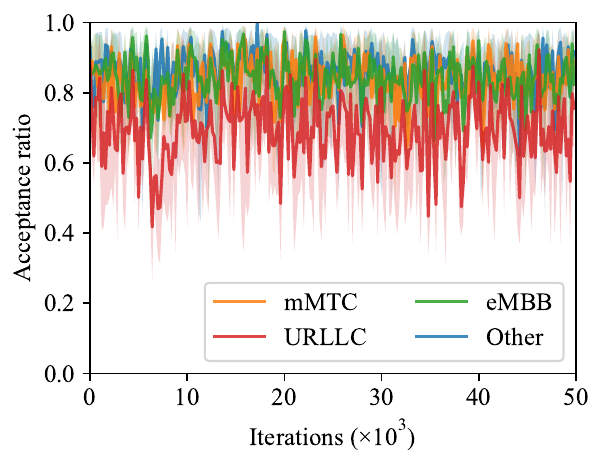}%
}
\caption{resource utilization and acceptance ratio in default admission policies}
\label{EnvDe}
\end{figure}

The number of training epochs for step 2 and step 3 in \textbf{Algorithm \ref{al2}} is set as 6000 and 50000 respectively. Step 2, which exclusively focuses on training the critic network, does not interfere with the network operation when the default policy is accurately replicated. Therefore, we focus on analyzing the performance in step 3. Firstly, we adopt three default policies for 50,000 decision epochs, with the resource utilization and acceptance ratio illustrated in Fig. \ref{EnvDe}. Because the stochastic arrival and service process will hinder the performance comparison of different policies, we record data every 200 epochs and conduct four experiments using different random seeds. The solid lines in the figures represent the average values across multiple experiments, while the shaded areas denote the 75\% error bar. The curves highlight the characteristics of different policies. In the PRIO policy, the acceptance ratio of services in different slices follows the pre-defined priority order, as shown in Fig. \ref{PRIOAR}. In contrast, the ILP and Greedy policies achieve relatively higher radio resource utilization by accepting more eMBB and `other' requests.

\begin{figure}[!t]
\centering
\subfloat[Dueling-DQN]{\includegraphics[width=0.248\textwidth]{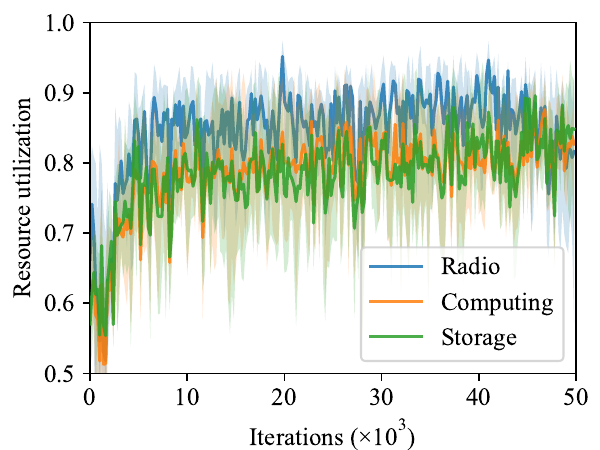}%
\label{dqn1}}
\subfloat[Dueling-DQN]{\includegraphics[width=0.248\textwidth]{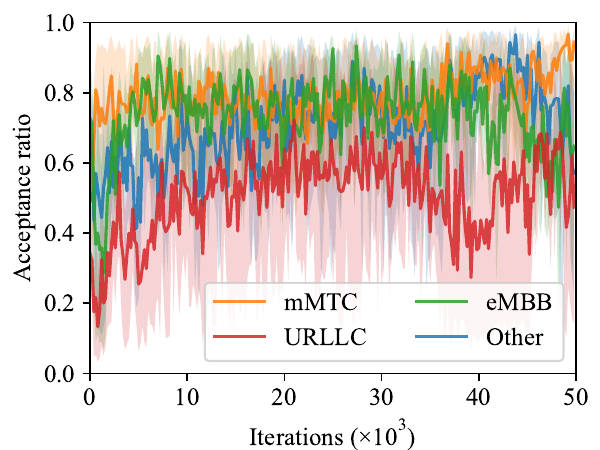}%
\label{dqn2}}

\subfloat[Modified A2C]{\includegraphics[width=0.248\textwidth]{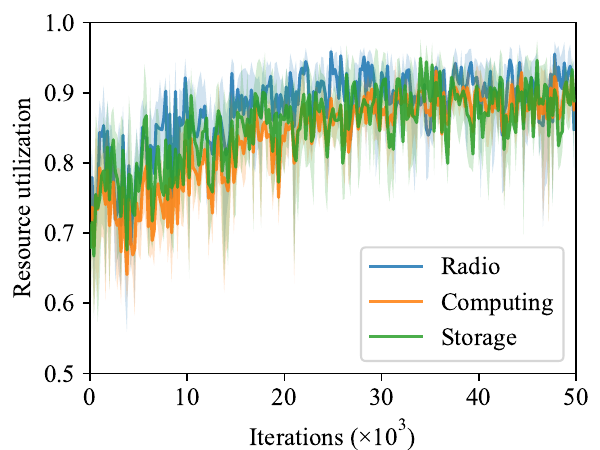}%
\label{a}}
\subfloat[Modified A2C]{\includegraphics[width=0.248\textwidth]{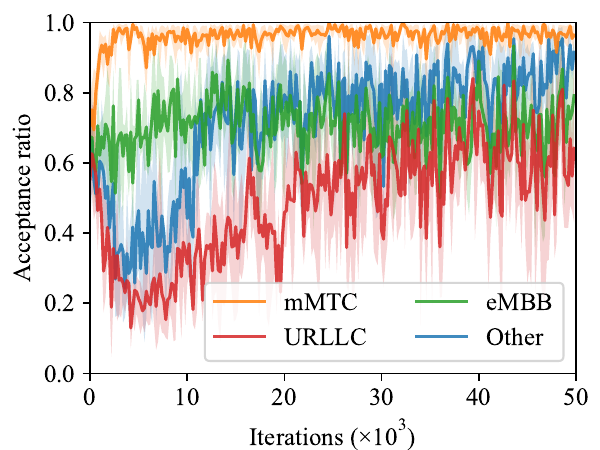}%
\label{b}}

\caption{resource utilization and acceptance ratio in directly trained DRL}
\label{EnvRL}
\end{figure}

We subsequently conduct direct reinforcement learning for two DRL models: the state-of-the-art Dueling-DQN, and our proposed DRL model, the modified A2C, within the network environment, as illustrated in Figure \ref{EnvRL}. During the initial training phases, both directly trained models exhibited stochastic behavior, resulting in comparatively low resource utilization and an unstable acceptance ratio. Furthermore, the Dueling-DQN, lacking a customized output layer for handling queued requests, encountered challenges in achieving convergence and maintaining stability, as indicated by the wider shaded areas. After approximately 20,000 decision epochs, as our DRL model gradually converges, we observe a plateau in resource utilization as well as the stabilization of the acceptance ratio. According to the acceptance ratio curves, modified A2C exhibits a tendency to accept more mMTC and Other requests to increase storage resource occupation.

Next, we implement the DT-assisted DRL solution based on different default policies. In contrast to directly trained models, all DT-assisted DRL models maintain high resource utilization throughout the entire training phase. At the beginning of training, the acceptance ratio pattern in DT-assisted DRL shows consistency with that in default policy, as illustrated on the left side of Figures \ref{d} and \ref{PRIOAR}. When the training progresses, the acceptance ratio gradually evolves and eventually aligns with that in the directly trained DRL, as depicted on the right side of Figures \ref{d} and \ref{b}.

\begin{figure}[!t]
\centering
\subfloat[DRL with DT (PRIO)]{\includegraphics[width=0.248\textwidth]{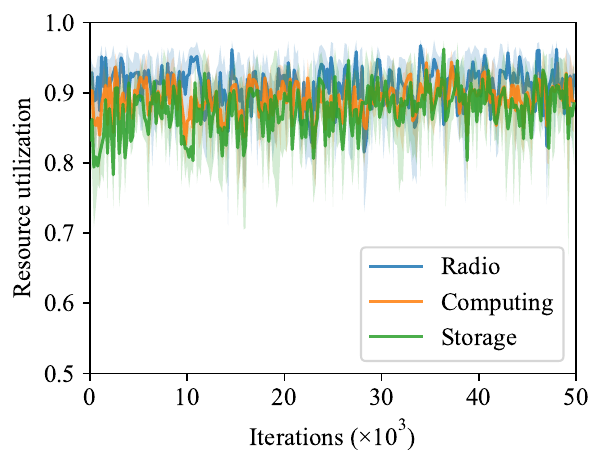}%
}
\subfloat[DRL with DT (PRIO)]{\includegraphics[width=0.248\textwidth]{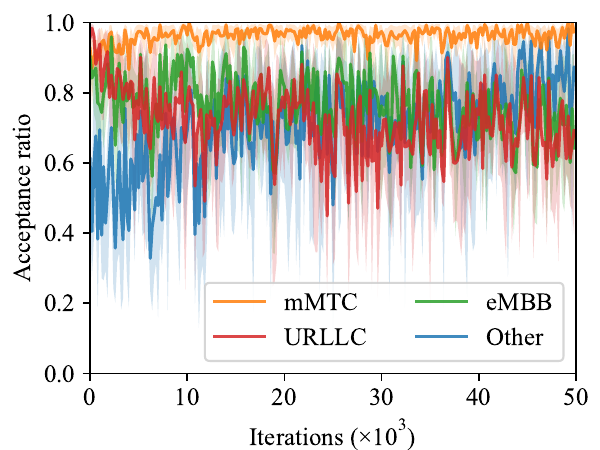}%
\label{d}}

\subfloat[DRL with DT (ILP)]{\includegraphics[width=0.248\textwidth]{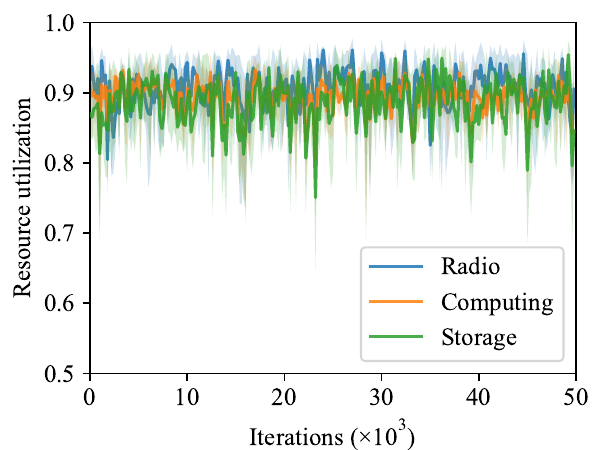}%
}
\subfloat[DRL with DT (ILP)]{\includegraphics[width=0.248\textwidth]{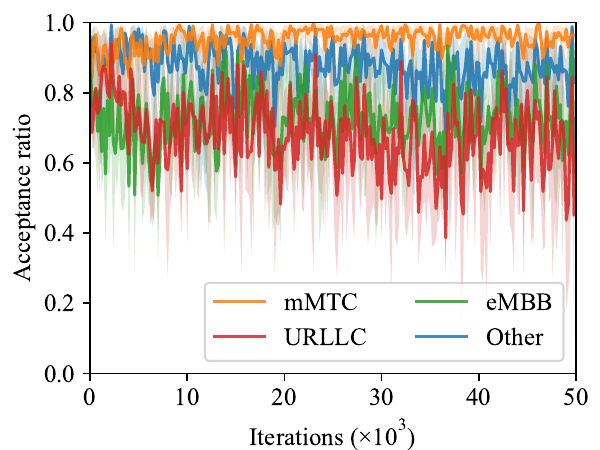}%
}

\subfloat[DRL with DT (Greedy)]{\includegraphics[width=0.248\textwidth]{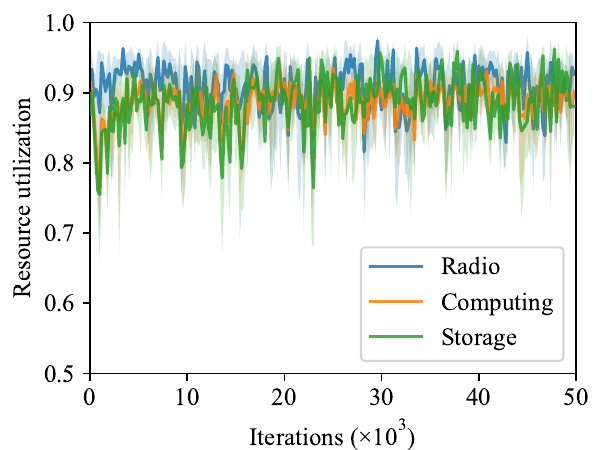}%
}
\subfloat[DRL with DT (Greedy)]{\includegraphics[width=0.248\textwidth]{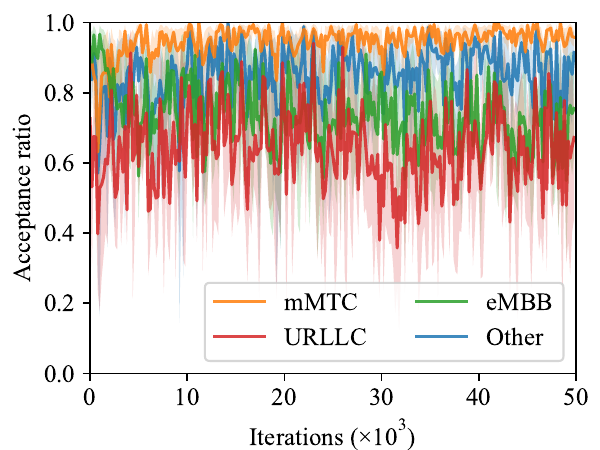}}%

\caption{resource utilization and acceptance ratio in DT-assisted DRL}
\end{figure}

\begin{figure*}[!t]
\centering
\includegraphics[width=1\textwidth]{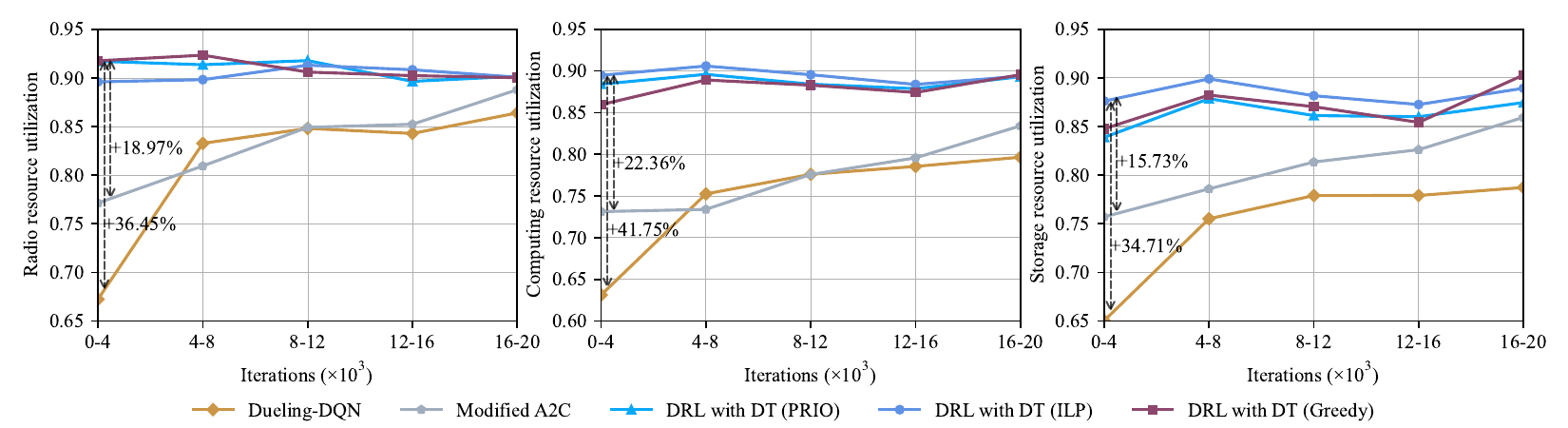}
\caption{Comparison of resource utilization in different methods during the early training stage}
\label{res_uti}
\end{figure*}

To quantitatively analyze resource utilization performance between directly trained DRL and DT-assisted DRL methods, we evaluate results from the first 20,000 decision epochs, aggregate data in 4,000-epoch intervals, and present line charts for each resource type. As depicted in Fig. \ref{res_uti}, all three DT-assisted DRL methods demonstrate a notable advantage in resource utilization over the directly trained DRL method. Specifically, within the first 4,000 epochs, DT-assisted DRL outperforms the state-of-the-art Dueling-DQN by a substantial margin, with resource utilization improvements up to 41.75\%. Moreover, to eliminate the influence of model differences, we also assess the performance of DT-assisted DRL against the directly trained identical model. The results show that the DT assistance yields an exclusive enhancement in resource utilization of up to 22.36\%. These disparities in resource utilization tend to diminish as the models converge gradually.

\begin{figure}[!t]
\centering
\includegraphics[width=0.5\textwidth]{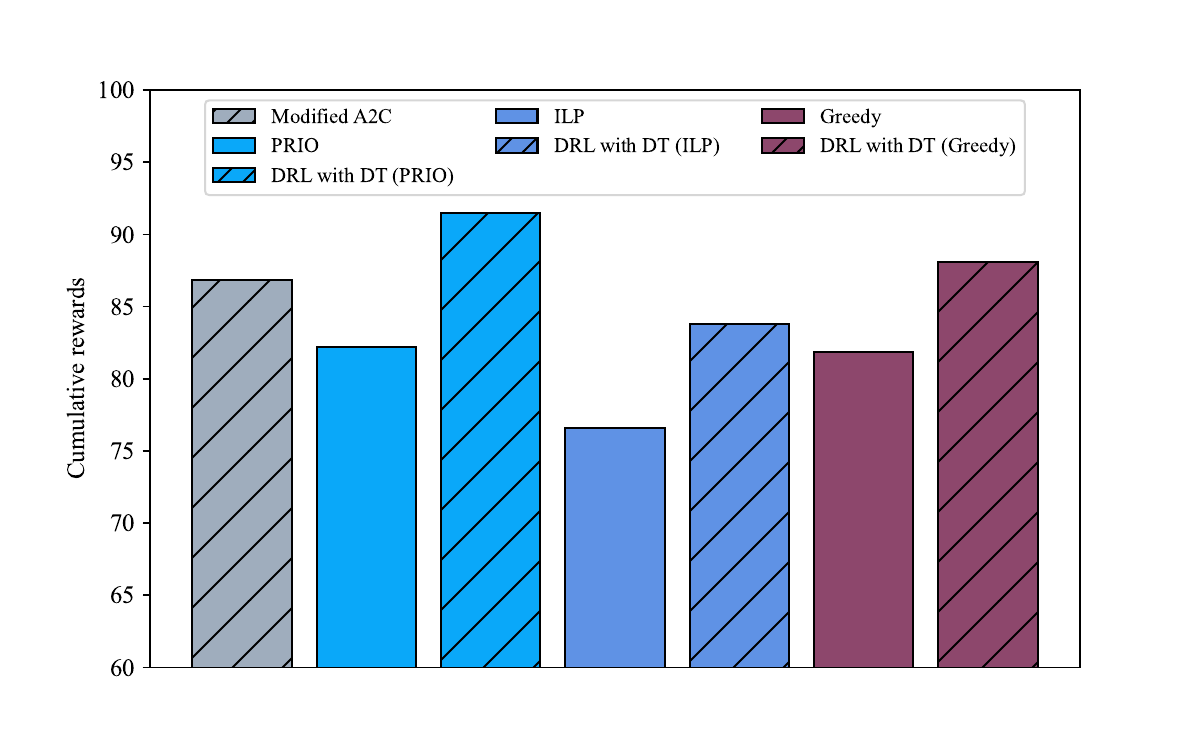}
\caption{Average reward in 400 iterations after training}
\label{curewards}
\end{figure}

Furthermore, we compare the cumulative rewards using different methods to examine whether the optimization goal has been achieved. Fig. \ref{curewards} illustrates the cumulative rewards (total storage-based revenue) over 400 decision epochs, where all DRL models outperform the default admission policies in the preset target. Additionally, we observe that the default admission policy can influence the performance of DT-assisted DRL to a certain extent. When default policies achieved relatively high storage revenues (PRIO and Greedy), DT-assisted DRL performed similarly or better than directly trained DRL. In contrast, the ILP policy's deficiency in storage revenue leads the ILP-based DT-assisted DRL to underperform compared to directly trained DRL. Nevertheless, this phenomenon primarily stems from the limited number of training samples, and we suppose that it will diminish as the models converge further after a substantial number of decision epochs.
 



\section{Conclusion}
In this paper, we have investigated the instability of conventional DRL methods for admission control in a sliced wireless network with request queues and combinatorial radio, computing, and storage resources. We have formulated the admission decision-making process as a semi-Markov decision process and subsequently simplified it into an equivalent discrete-time Markov decision process. To deal with the stochasticity of DRL, we have constructed a DT network of admission policy using supervised learning and proposed a DT-assisted online DRL solution. Extensive simulations demonstrated that the DT-assisted DRL model increased resource utilization by over 40\% compared to directly trained state-of-the-art Dueling-DQN and over 20\% compared to the directly proposed model during the initial training stage. 
This performance improvement is achieved while retaining the ability to optimize long-term rewards, thereby mitigating the risk of deploying DRL in practical wireless networks while sustaining its effectiveness. Meanwhile, the robust performance using a straightforward greedy policy implies that in case the default admission policy is too complex to replicate, like policies incorporating request prediction, a simple substitute policy could still be utilized to implement the proposed solution.





{\appendices
\makeatletter  
\section*{Appendix A\\Proof of Theorem 1}
\begin{proof}
An embedded Markov chain without disjoint closed sets implies the system definitely revisits a particular state after a certain number of events, thus exhibiting the properties of a renewal process. Fix the initial state $\mathbf{s}_0$ and define the cycle as the time between two successive transitions into state $\mathbf{s}_0$. According to the renewal reward theorem for renewal reward processes, we have
\begin{equation}\label{eqrt}
    \lim _{t \rightarrow \infty} \frac{R(t)}{t}=\frac{\mathbb{E}\left[R_1\right]}{\mathbb{E}\left[T_1\right]}, 
\end{equation}
where $R_1$ represents the total rewards earned in the first renewal cycle, $T_1$ represents the length of the first renewal cycle, and $\mathbb{E}[.]$ denotes the expectation. Also, by the expected-value version of the renewal-reward theorem, it follows
\begin{align}
     \lim _{m \rightarrow \infty} \frac{\mathbb{E}\left[\sum _{i=1}^{m}r_i\right]}{m}=\frac{\mathbb{E}\left[R_1\right]}{\mathbb{E}\left[N_1\right]},\\
    \lim _{m \rightarrow \infty} \frac{\mathbb{E}\left[\sum _{i=1}^{m}\tau_i\right]}{m}=\frac{\mathbb{E}\left[T_1\right]}{\mathbb{E}\left[N_1\right]},   \label{eqet}    
\end{align}
where $r_i$ and $\tau_i$ denote the reward and the sojourn time over the $i$-th epoch respectively, and $N_1$ represents the number of epochs in the first renewal cycle. From (\ref{eqrt})-(\ref{eqet}), we have
\begin{equation}\label{eqrtt}
    \lim _{t \rightarrow \infty} \frac{R(t)}{t}=\lim _{m \rightarrow \infty} \frac{\mathbb{E}\left[\sum _{i=1}^{m}r_i\right]}{\mathbb{E}\left[\sum _{i=1}^{m}\tau_i\right]}.
\end{equation}
Due to the Markovian property of the reward and sojourn time, we have
\begin{align}
    \mathbb{E}\left[\sum _{i=1}^{m}r_i\right] = \sum_{i=1}^{m} \sum_{\mathbf{s}\in \mathcal{S}} r(\mathbf{s},\mathbf{a}_{\mathbf{s}|\pi}) p^{(i)}_{\mathbf{s},\mathbf{a}^{}_{\mathbf{s}|\pi},\mathbf{s}^\prime},\label{eqeri}\\
    \mathbb{E}\left[\sum _{i=1}^{m}\tau_i\right] = \sum_{i=1}^{m} \sum_{\mathbf{s}\in \mathcal{S}} \tau(\mathbf{s},\mathbf{a}_{\mathbf{s}|\pi}) p^{(i)}_{\mathbf{s},\mathbf{a}_{\mathbf{s}|\pi},\mathbf{s}^\prime},\label{eqeti}
\end{align}
where $p^{(i)}_{\mathbf{s},\mathbf{a}_{\mathbf{s}|\pi},\mathbf{s}^\prime}= \text{Pr}(S_{t+i}=\mathbf{s}^\prime | S_{t}=\mathbf{s},A_t =\mathbf{a}_{\mathbf{s}|\pi})$ refers to the $i$-step transition probability under policy $\pi$. By leveraging the relationship between $i$-step transition probability and equilibrium probability
\begin{equation}
    \lim _{m \rightarrow \infty} \frac{1}{m}\sum_{i=1}^{m} p^{(i)}_{\mathbf{s},\mathbf{a}_{\mathbf{s}|\pi},\mathbf{s}^\prime} = \omega(\mathbf{s}|\pi)
\end{equation}
and substituting into (\ref{eqrtt}) with (\ref{eqeri}) and (\ref{eqeti}), we obtain
\begin{equation}
    \lim _{t \rightarrow \infty} \frac{R(t)}{t}= \frac{\sum_{\mathbf{s} \in \mathcal{S}} r\left(\mathbf{s},\mathbf{a}_{\mathbf{s}|\pi}\right) \omega(\mathbf{s}|\pi) }{ \sum_{\mathbf{s} \in \mathcal{S}} \tau\left(\mathbf{s},\mathbf{a}_{\mathbf{s}|\pi}\right) \omega(\mathbf{s}|\pi)}.
\end{equation}
\end{proof}

\section*{Appendix B\\Proof of Theorem 2}
\begin{proof}
The equilibrium probabilities $\bar{\omega}(\mathbf{s}|\pi)$ in discrete-time MDP satisfy the following equilibrium equation
\begin{equation}\label{wbarspi}
        \bar{\omega}(\mathbf{s}|\pi) = \sum_{\mathbf{s}_0\in \mathcal{S}} \bar{\omega}(\mathbf{s}_0|\pi) \Bar{p}_{\mathbf{s_0},\mathbf{a}_{\mathbf{s}_0|\pi},\mathbf{s}}
\end{equation}
By substituting $\Bar{p}_{\mathbf{s_0},\mathbf{a}_{\mathbf{s}_0|\pi},\mathbf{s}}$ with (\ref{transitionp}), we obtain

\begin{equation}
\label{wspi}
\begin{split}
    \bar{\omega}(\mathbf{s}|\pi) =\sum_{\mathbf{s}_0\in \mathcal{S}} \bar{\omega}(\mathbf{s}_0|\pi) \frac{\tau_0}{\tau(\mathbf{s}_0,\pi_{\mathbf{s}_0})}&p_{\mathbf{s_0},\mathbf{a}_{\mathbf{s}_0|\pi},\mathbf{s}}+\\
    &\bar{\omega}(\mathbf{s}|\pi)\left(1- \frac{\tau_0}{\tau(\mathbf{s},\pi_{\mathbf{s}})}\right).
\end{split}    
\end{equation}
By eliminating $\bar{\omega}(\mathbf{s}|\pi)$ on both sides of this equation and dividing by $\tau_0$, (\ref{wspi}) is rewritten as
\begin{equation}
    \frac{\bar{\omega}(\mathbf{s}|\pi)}{\tau(\mathbf{s},\pi_{\mathbf{s}})} = \sum_{\mathbf{s}_0\in \mathcal{S}} \frac{\bar{\omega}(\mathbf{s}_0|\pi)}{\tau(\mathbf{s}_0,\pi_{\mathbf{s}_0})} p_{\mathbf{s_0},\mathbf{a}_{\mathbf{s}_0|\pi},\mathbf{s}}.
\end{equation}
Notice that the embedded Markov chain in SMDP also satisfies the equilibrium equation in (\ref{equili}). Thus, for a certain constant $\gamma>0$, we have
\begin{equation}
    \omega(\mathbf{s}|\pi) = \gamma\frac{\bar{\omega}(\mathbf{s}|\pi)}{\tau(\mathbf{s},\pi_{\mathbf{s}})}.
    \label{equi}
\end{equation}
Since $\sum_{\mathbf{s}\in \mathcal{S}} \bar{\omega}_{\mathbf{s}} = 1$, we can choose the value of the constant as $\gamma = \sum_{\mathbf{s}\in \mathcal{S}} \tau(\mathbf{s},\pi_{\mathbf{s}}) \omega(\mathbf{s}|\pi)$. Finally, using (\ref{averagerevenue}), (\ref{rewardbar}), and (\ref{equi}), the long-term average reward of the equivalent discrete-time MDP is derived as follows

\begin{align}
    \bar{g}(\pi) &= \sum_{\mathbf{s}\in \mathcal{S}} \bar{r}_\mathbf{s}\left(\mathbf{a}_{\mathbf{s}|\pi}\right)\bar{\omega}(\mathbf{s}|\pi)\notag\\
    &= \sum_{\mathbf{s}\in \mathcal{S}} \frac{r\left(\mathbf{s},\mathbf{a}_{\mathbf{s}|\pi}\right)}{\tau\left(\mathbf{s},\mathbf{a}_{\mathbf{s}|\pi}\right)} \frac{\omega(\mathbf{s}|\pi) \tau(\mathbf{s},\pi_{\mathbf{s}})}{\sum_{\mathbf{s}\in \mathcal{S}} \tau(\mathbf{s},\pi_{\mathbf{s}}) \omega(\mathbf{s}|\pi)}\notag\\
    & = \frac{\sum_{\mathbf{s} \in \mathcal{S}} r\left(\mathbf{s},\mathbf{a}_{\mathbf{s}|\pi}\right) \omega(\mathbf{s}|\pi) }{ \sum_{\mathbf{s} \in \mathcal{S}} \tau\left(\mathbf{s},\mathbf{a}_{\mathbf{s}|\pi}\right) \omega(\mathbf{s}|\pi)} = g(\pi)
\end{align}

\end{proof}
\ifCLASSOPTIONcaptionsoff
  \newpage
\fi



\footnotesize
\bibliographystyle{IEEEtran}

\bibliography{IEEEabrv,IEEEexample}
\end{document}